\definecolor{linkcolor}{RGB}{83,83,182}
\definecolor{citecolor}{RGB}{0, 12, 226}
\definecolor{darkspringgreen}{rgb}{0.09, 0.45, 0.27}
\title{Statistical control for spatio-temporal MEG/EEG source imaging with desparsified multi-task Lasso}
\author{%
  Jerome-Alexis~Chevalier\\
  Inria Saclay\\
  Paris-Saclay, France\\
  \texttt{jerome-alexis.chevalier@inria.fr} \\
  \And
  Alexandre, Gramfort\\
  Inria Saclay\\
  Paris-Saclay, France\\
  \texttt{alexandre.gramfort@inria.fr} \\
  \AND
  Joseph Salmon \\
  IMAG, Université de Montpellier\\
  Montpellier, France \\
  \texttt{joseph.salmon@umontpellier.fr} \\
  \And
  Bertrand, Thirion\\
  Inria Saclay, CEA\\
  Paris-Saclay, France\\
  \texttt{bertrand.thirion@inria.fr} \\
}
\crefname{prop}{Proposition}{propositions}
\crefname{prop}{Proposition}{Propositions}
\crefname{lem}{lemma}{lemmas}
\Crefname{lem}{Lemma}{Lemmas}
\crefname{thm}{theorem}{theorems}
\Crefname{thm}{Theorem}{Theorems}
\crefname{df}{Definition}{definitions}
\crefname{df}{Definition}{Definitions}
\crefname{rk}{remark}{remarks}
\crefname{rk}{Remark}{Remarks}
\begin{document}

\maketitle

\begin{abstract}

  Detecting where and when brain regions activate in a cognitive task
  or in a given clinical condition is the promise of non-invasive
  techniques like magnetoencephalography (MEG) or
  electroencephalography (EEG).
  This problem, referred to as source localization, or source
  imaging, poses however a high-dimensional statistical inference
  challenge.
  While sparsity promoting regularizations have been proposed to
  address the regression problem, it remains unclear how to ensure
  statistical control of false detections.
  Moreover, M/EEG source imaging requires to work with spatio-temporal data
  and autocorrelated noise.
  To deal with this, we adapt the desparsified Lasso estimator ---an estimator
  tailored for high dimensional linear model that asymptotically follows a
  Gaussian distribution under sparsity and moderate feature correlation
  assumptions--- to temporal data corrupted with autocorrelated noise.
  We call it the desparsified multi-task Lasso (d-MTLasso).
  We combine d-MTLasso with spatially constrained clustering to reduce
  data dimension and with ensembling to mitigate the arbitrary choice
  of clustering; the resulting estimator is called ensemble of clustered
  desparsified multi-task Lasso (ecd-MTLasso).
  With respect to the current procedures, the two advantages of
  ecd-MTLasso are that \textit{i)}it offers statistical guarantees and
  \textit{ii)}it allows to trade spatial specificity for sensitivity,
  leading to a powerful adaptive method.
  %
  %
  Extensive simulations on realistic head geometries, as well as
  empirical results on various MEG datasets, demonstrate the high
  recovery performance of ecd-MTLasso and its primary practical benefit:
  offer a statistically principled way to threshold MEG/EEG source maps.
\end{abstract}

\section{Introduction}
\label{sec:intro}

Source imaging with magnetoencephalography (MEG) and
electroencephalography (EEG) delivers insights into brain
activity with high temporal and good spatial resolution
in a non-invasive way~\citep{baillet-etal:2001}.
It however requires to solve the bioelectromagnetic inverse problem,
which is a high-dimensional ill-posed regression problem.
Various approaches have been proposed to regularize the estimation of
the regression coefficients that map activity to brain locations.
Historically, $\ell_2$ regularization was considered first
 \citep{Hamalainen1994}, with successive improvements known as
dSPM~\citep{dspm} and sLORETA~\citep{PascualMarqui:2002} that are
referred to as ``noise normalized'' solutions.
The reason is that the coefficients are standardized with an estimate
of the noise standard deviation, producing outputs that are comparable
to T or F statistics, yet not statistically calibrated.
These latter techniques have since become standard when
using $\ell_2$ approaches.

More recently, alternative approaches based on sparsity assumptions
have been proposed with the ambition to improve the spatial specificity
of M/EEG source imaging~\citep{Matsuura-Okabe:1995,
Haufe_Nikulin_Ziehe_Mueller_Nolte09,gramfort-etal:2012,Lucka-etal:2012,
Wipf-Nagarajan:2009}.
The output of such methods consists of focal sources as opposed
to blurred images obtained with $\ell_2$ regularization.
However, obtaining statistics (``noise normalized'') from sparse or
non-linear estimators seems challenging, especially since M/EEG data
are spatio-temporal data with complex noise structure.
A natural way to deal with the temporal dimension is to consider a
multi-task estimator and structured sparse priors based on $\ell_1/\ell_2$
mixed norms~\citep{Ou_Hamalainen_Golland09,gramfort-etal:2012}.

In the statistical literature, some attempts to obtain an
estimate of both regression coefficients and their variance
have been proposed for linear models in high dimension
\citep{Wasserman_Roeder09, Meinshausen_Meier_Buhlmann09, Buhlmann13}.
These estimates can then be translated to $p$-value maps, \ie maps of $p$-values associated with each covariate.
Some methods adapted for sparse scenarios have then proposed
to debias the Lasso to obtain $p$-values or confidence intervals
\citep{Zhang_Zhang14,vandeGeer_Buhlmann_Ritov_Dezeure14,Javanmard_Montanari14}.
We refer to such variants as desparsified Lasso.
Recently, desparsified extensions of group Lasso have also been considered \citep{Mitra_Zhang16,Stucky_vandeGeer18}.
However, all these previous methods generally lack of power when $p \gg n$.
Here, we propose to address a multi-task setting in the presence of
correlated noise, and to deal with high-dimensional when $p \gg n$
leveraging on data structure as done by \citet{Chevalier_Salmon_Thirion18}.
All these challenges need to be considered for M/EEG source imaging.

Our first contribution is to propose the desparsified
multi-task Lasso (\dMTLasso), an extension of the desparsified Lasso
(\dLasso) \citep{Zhang_Zhang14,vandeGeer_Buhlmann_Ritov_Dezeure14} to
multi-task setting \citep{Obozinski_Taskar_Jordan10}.
More precisely, we adapt the group formulation by \citet{Mitra_Zhang16}
to the multi-task setting that enjoys
\textit{i)}a simple statistic test formula with
\textit{ii)}a natural integration of auto-correlated noise and
\textit{iii)}a simplification of the assumptions.
Our second contribution is to introduce ensemble of clustered
desparsified multi-task Lasso (ecd-MTLasso), which has two advantages
compared to current methods: \textit{i)}it offers statistical guarantees
and \textit{ii)}it allows to trade spatial specificity for sensitivity,
leading to a powerful adaptive method.
Our third contribution is an empirical validation of the theoretical
claims.
In particular, we run extensive simulations on realistic head geometries,
as well as empirical results on various MEG datasets to demonstrate the high
recovery performance of ecd-MTLasso and its primary practical benefit:
offer a statistically principled way to threshold MEG/EEG source maps.

%

\section{Theoretical Background}
\label{sec:background}

In this section, we give the noise model, we provide standard tools
for solving the source localization problem and, mainly, we present
three new methods with their assumptions and statistical guarantees.

\subsection{Model and notation}
\label{subsec:model}
%
For clarity, we use bold lowercase for vectors and bold uppercase for matrices.
For any positive integer $p \in \bbN^*$, we write $ [p]$ for
the set $\discset{1, \ldots, p}$.
For a vector $\bm\beta$, $\bm\beta_{j}$ refers to its $j$-th coordinate.
For a matrix $\*X \in \bbR^{n \times p}$, $\*X^{(-j)}$ refers to
matrix $\*X$ without the $j$-th column, $\*X_{i,.}$ refers to the $i$-th
row and $\*X_{.,j}$ to the $j$-th column and $\*X_{i,j}$ refers to the
element in the $i$-th row and $j$-th column.
%
%
The notation $\norm{\cdot}$ refers to the Frobenius norm for matrices and
to the standard Euclidean norm for vectors.
For a covariance matrix $\*M$, the Mahalanobis norm is denoted by
$\norm{\cdot}_{\*M^{-1}}$ and for a given vector $\*a$ we have
$\norm{\*a}^2_{\*M^{-1}} \triangleq \mathrm{Tr}(\*a^{\top} \*M^{-1} \*a)$.
For $\*B \in \bbR^{p \times T}$,
$\normin{\*B}_{2,1} = \sum_{j=1}^{p} \normin{\*B_{j, .}}$, and its (row)
support is $\mathrm{Supp}(\*B) = \{j \in [p] : \*B_{j,.} \neq 0\}$.
We assume that the underlying model is linear:
\begin{align}\label{eq:noise_model}
\*Y = \*X\*B + \*E \enspace ,
\end{align}
where
$\*Y \in \bbR^{n \times T}$ is the signal observed on M/EEG sensors,
$ \*X \in \bbR^{n \times p}$ the design matrix representing the M/EEG
forward model, $\*B \in \bbR^{p \times T}$ the underlying signal in
source space and $ \*E \in \bbR^{n \times T}$ the noise.
We assume that there exist $\rho \in [0,1)$ and $\sigma>0$ such that all
$t \in [T]$, $ \*E_{.,t} \sim \mathcal{N}(\*0,\sigma^2 \*I_n) $ and
that for all $i \in [n]$ and all $t \in [T -1]$,
%
$\Cor(\*E_{i,t}, \*E_{i,t+1}) = \rho.$
%
For all $i \in [n]$, $\*E_{i,.}$ is Gaussian with Toeplitz covariance,
\ie defining $\*M \in \bbR^{T \times T}$ by $\*M_{t,u} = \sigma^2 \rho^{|t-u|}$
for all $(t,u) \in [T]^2$, we have:
\begin{align}\label{eq:noise_law_2}
\*E_{i,.} \sim \mathcal{N}(\*0,\*M) \enspace.
\end{align}
We further assume that $\*X$ has been column-wise standardized
and denote by $\hat{\bm\Sigma} \in \bbR^{p \times p}$ the
empirical covariance matrix of $\*X$, \ie
$\hat{\bm\Sigma} = \*X^{\top} \*X / n$ with $\hat{\bm\Sigma}_{j,j} = 1$.
All proofs are given in \Cref{sec:proof}.

\subsection{Metrics for statistical inference in M/EEG}
\label{subsec:metrics}
%
To quantify the ability of a M/EEG source imaging technique
to obtain a good estimated $\hat{\*B}$,
a commonly reported quantity is the Peak
Localization Error (PLE)~\citep{HAUK20111966}.
It consists in measuring the distance (in mm) along the cortical
surface between the true simulated source and the location with
maximum amplitude in the estimator.
By contrast, spatial dispersion (SD) measures how much
the activity is spread out by the inverse method~\citep{MOLINS20081069}.

To quantify the control of statistical errors,
we consider a generalization of the Family Wise Error Rate (FWER) \citep{Hochberg_Tamhane87}: the $\delta$-FWER.
As illustrated in \Cref{fig:spatial_tolerance} in appendix,
it is the FWER taken with respect to a ground truth dilated spatially
by an amount $\delta$ ---in the present study a distance in mm.
A rigorous definition of $\delta$-FWER is given in \Cref{sec:complement_metrics}.
The rationale is that detections made outside of the support, but less
than $\delta$ away from the support should count as slight
inaccuracies of the methods, not as false positives.
In an analogous manner, $\delta\text{-FDR} = (1 -
\delta\text{-precision})$ has been proposed recently as an extension of the
False Discovery Rate (FDR) \citep{Benjamini_Hochberg95} to include a
spatial tolerance~\citep{Nguyen_Chevalier_Thirion19,Gimenez_Zou19}.
We thus characterize the selection capabilities of the methods
through a $\delta$-precision/recall curve.

\subsection{Classical Solutions}
\label{subsec:competitors}
%
The sLORETA and dSPM estimators are derived from the ridge estimator \citep{Hoerl_Kennard70}:
\begin{align}\label{eq:beta_ridge}
  \hat{\*B}^{\rm{Ridge}} = \*K \*Y
  \quad\textrm{where}\quad
  \*K = \*X^\top (\*X \*X^\top + \lambda \*I)^{-1}
  \enspace .
\end{align}
They are obtained by scaling each row $j$ in $\hat{\*B}^{\rm{Ridge}}$ by
an estimate of the noise level at location $j$. It reads~\citep{Lin:2006}
$\hat{\*B}^{\rm{dSPM}}_{j,t} = \hat{\*B}^{\rm{Ridge}}_{j,t} / \sigma^{\rm{dSPM}}_j$
and $\hat{\*B}^{\rm{sLORETA}}_{j,t} = \hat{\*B}^{\rm{Ridge}}_{j,t} / \sigma^{\rm{sLORETA}}_j$,
where $\sigma^{\rm{dSPM}}_j = \sqrt{\sigma^2 [\*K\*K^\top]_{j,j}}$
and $\sigma^{\rm{sLORETA}}_j = \sqrt{[\*K(\sigma^2 \*I + \*X\*X^\top)\*K^\top]_{j,j}}$.
Interestingly, it can be proved that in the absence of noise and when only
a single coefficient is non-zero, the sLORETA estimate has its maximum
at the correct location~\citep{PascualMarqui:2002}.
Assuming $\*B_{.,t} \sim \mathcal{N}(\*0,\*I)$, the covariance of $\*Y$
reads $\sigma^2 \*I + \*X\*X^\top$.
Hence, one can consider that sLORETA adds to dSPM an extra term in the sensor covariance matrix that comes from the sources.
Note that these methods treat each time instant independently, hence ignoring source and noise temporal autocorrelations.

\subsection{Desparsified multi-task Lasso (\dMTLasso)}
\label{sub:desparsified_multi_task_lasso}
%
Let us first recall the definition of the multi-task Lasso (MTLasso) estimator \citep{Obozinski_Taskar_Jordan10} in our setting.
For a tuning parameter\footnote{$\lambda$ is set by cross-validation on a logarithmic grid going from $\tfrac{\lambda_{\max}}{100}$ to $\lambda_{\max}$, where $\lambda_{\max}= \norm{\*X^\top \*Y}_{2,\infty}$.} $\lambda>0$, it is defined as
\begin{align}\label{eq:beta_mtl}
  \hat{\*B}^{\rm{MTL}} \in \underset{\*{B} \in \bbR^{p \times T}}{\argmin}
\left\{
\frac{1}{2n}\norm{\*Y - \*X \*B}^{2} + \lambda \norm{\*B}_{2,1}
\right\} \enspace.
\end{align}
It is well known that similarly to the Lasso, MTLasso is biased: it tends to shrink rows with large amplitude towards zero.
Below, we provide an adaptation of the Desparsified Lasso following the approach by \citet{Zhang_Zhang14}, see also \cite{Mitra_Zhang16}, to ensure statistical control.
The approach relies on the introduction of score vectors $\*z_1,\dots,\*z_p$ in $\bbR^n$ defined by
\begin{equation}\label{eq:z-def}
    \*z_{j} = \*X_{\cdot,j} - \*X^{(-j)} \hat{\bm\beta}^{(-j)}_{\bm\alpha_j} \enspace ,
\end{equation}
where, for $j\in [p]$, $\hat{\bm\beta}^{(-j)}_{\bm\alpha_j}$ is the
Lasso solution (\citet{tibshirani1996, chen1994basis}) of the regression
of $\*X_{\cdot,j}$ against $\*X^{(-j)}$ with regularization parameter\footnote{In \citep[Table 1]{Zhang_Zhang14} an algorithm for choosing $\bm\alpha_j$ is proposed.
We noticed that taking for all $j \in [p]$, $\bm\alpha_{j} = c \bm\alpha_{\max, j} := c \normin{\*X^{(-j)} \*X_{\cdot,j}}_{\infty} / n$
with $c = 0.5\%$ for M/EEG data allows to make a
significant computation gain and yields adequate residuals for $C=1000$
(see \Cref{sub:clustering_to_handle_structured_high_dimensional_data_}).}
$\bm\alpha_j$.
Note that these score vectors are independent of $\*Y$ and
their computation is then equivalent to solving the node-wise
Lasso \citep{Meinshausen_Buhlmann06}.
For such vectors, the noise model in \eqref{eq:noise_model} yields
\begin{align}\label{eq:debiasied_explained}
	\frac{\*z_{j}^\top \*Y}{\*z_{j}^\top \*X_{.,j}}
	= \*B_{j,.}
	  + \frac{\*z_{j}^\top \*E}{\*z_{j}^\top \*X_{.,j}}
	  + \sum_{k \neq j} \frac{\*z_{j}^\top \*X_{.,k} \*B_{k,.}}{\*z_{j}^\top \*X_{.,j}} \enspace.
\end{align}
Discarding the noise term and plugging $\hat{\*B}_{k, .}^{\rm{MTL}}$ as
a preliminary estimator of $\*B_{k,.}$ in \eqref{eq:debiasied_explained},
we coin the desparsified multi-task Lasso (\dMTLasso), a debiased
estimator of $\hat{\*B}^{\rm{MTL}}$ defined for all $j\in [p]$ by
\begin{align}\label{eq:debiasied_MTL}
	\hat{\*B}_{j,.}^{(\rm\dMTLasso)}
	=
	\frac{\*z_{j}^\top \*Y}{\*z_{j}^\top \*X_{.,j}}
	- \sum_{k \neq j} \frac{\*z_{j}^\top \*X_{.,k} \hat{\*B}^{\rm{MTL}}_{k,.}}{\*z_{j}^\top \*X_{.,j}}\enspace .
\end{align}
To derive \dMTLasso statistical properties, we need
the extended Restricted Eigenvalue (RE) property
\citep[Assumption 3.1]{Lounici_Pontil_vandeGeer_Tsybakov11},
detailed in \Cref{sec:RE_assumption}. More precisely, we assume that

(A1) RE($\*X,s$) is verified on $\*X$ for a sparsity parameter $s \geq |\mathrm{Supp}(\*B)|$ and a constant $\kappa = \kappa(s) > 0$.

Roughly, A1 can be seen as a combination of sparsity and "moderate" feature correlation assumptions.
\begin{prop}\label{prop:desparsified_mtlasso}
Considering the model in \Cref{eq:noise_model}, assuming A1
and for a choice of $\lambda$ large enough\footnote{See the proof of
\citep[Theorem3.1]{Lounici_Pontil_vandeGeer_Tsybakov11}.} in \Cref{eq:beta_mtl}, then with high probability:
\begin{align}
\label{eq:delta}
& \sqrt{n}(\hat{\*B}^{{(\rm \dMTLasso)}} - \*B) = \bm\Lambda + \bm\Delta
 \enspace , \\
& \bm\Lambda_{j, .} \sim
\mathcal{N}_{p}(\*0, \, \hat{\bm\Omega}_{j, j} \*M)
, \text{ for all } j \in [p], \enspace  \text{where} \enspace
\hat{\bm\Omega}_{j, k} = \frac{n \*z_{j}^\top \*z_{k}}
    {|\*z_{j}^\top \*X_{.,j}| |\*z_{k}^\top \*X_{.,k}|}
\nonumber
    \\
& \norm{\bm\Delta}_{2, 1} =
\mathrm{O} \left(\frac{s \lambda \sqrt{\log(p)}}{\kappa^2}\right)
\end{align}
\end{prop}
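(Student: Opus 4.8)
The plan is to start from the exact decomposition in \Cref{eq:debiasied_explained}, which after subtracting $\*B_{j,.}$ and rescaling by $\sqrt n$ gives, for each $j\in[p]$,
\begin{align*}
\sqrt n\bigl(\hat{\*B}^{(\rm\dMTLasso)}_{j,.} - \*B_{j,.}\bigr)
= \underbrace{\frac{\sqrt n\,\*z_j^\top\*E}{\*z_j^\top\*X_{.,j}}}_{=:\bm\Lambda_{j,.}}
\;+\;\underbrace{\sum_{k\neq j}\frac{\sqrt n\,\*z_j^\top\*X_{.,k}}{\*z_j^\top\*X_{.,j}}\bigl(\*B_{k,.}-\hat{\*B}^{\rm MTL}_{k,.}\bigr)}_{=:\bm\Delta_{j,.}}\enspace.
\end{align*}
So the first step is purely algebraic: establish this identity and read off the definitions of $\bm\Lambda$ and $\bm\Delta$. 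The normalizing convention inside $\hat{\bm\Omega}$ (the absolute values $|\*z_j^\top\*X_{.,j}|$) must be handled consistently here — one checks that replacing $\*z_j^\top\*X_{.,j}$ by its absolute value only flips the sign of the corresponding row, which does not affect the Gaussian law since it is centered and symmetric.

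The second step is the distributional claim for $\bm\Lambda$. Since the score vectors $\*z_1,\dots,\*z_p$ are deterministic functions of $\*X$ (independent of $\*Y$, hence of $\*E$), the matrix $\bm\Lambda_{j,.} = \sqrt n\,\*z_j^\top\*E / |\*z_j^\top\*X_{.,j}|$ is a fixed linear functional of the Gaussian noise $\*E$. Using the row-wise law $\*E_{i,.}\sim\mathcal N(\*0,\*M)$ with independence across $i$ (which follows from $\*E_{.,t}\sim\mathcal N(\*0,\sigma^2\*I_n)$ for each $t$), a direct covariance computation gives $\mathrm{Cov}(\bm\Lambda_{j,.}) = \frac{n\,\*z_j^\top\*z_j}{(\*z_j^\top\*X_{.,j})^2}\,\*M = \hat{\bm\Omega}_{j,j}\,\*M$, and more generally $\mathrm{Cov}(\bm\Lambda_{j,.},\bm\Lambda_{k,.}) = \hat{\bm\Omega}_{j,k}\,\*M$ up to the sign convention; Gaussianity is preserved under linear maps, giving the stated law.

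The third step — and the one I expect to be the main obstacle — is the bound $\norm{\bm\Delta}_{2,1} = \mathrm O\!\left(s\lambda\sqrt{\log p}/\kappa^2\right)$. The strategy is the standard desparsified-Lasso argument adapted to the multi-task norm: bound $\norm{\bm\Delta}_{2,1}\le \max_{j,k\neq j}\frac{\sqrt n\,|\*z_j^\top\*X_{.,k}|}{|\*z_j^\top\*X_{.,j}|}\cdot p\cdot\max_k\norm{\*B_{k,.}-\hat{\*B}^{\rm MTL}_{k,.}}$ is too lossy; instead one writes $\bm\Delta_{j,.} = \frac{\sqrt n}{\*z_j^\top\*X_{.,j}}\sum_{k\neq j}\*z_j^\top\*X_{.,k}(\*B-\hat{\*B}^{\rm MTL})_{k,.}$ and applies Hölder in the form $\normin{\bm\Delta_{j,.}}\le \frac{\sqrt n}{|\*z_j^\top\*X_{.,j}|}\bignorm{\*z_j^\top\*X^{(-j)}}_\infty\cdot\normin{\*B-\hat{\*B}^{\rm MTL}}_{2,1}$. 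Two ingredients then close the argument: (i) the KKT conditions of the node-wise Lasso defining $\*z_j$ give $\bignorm{\*z_j^\top\*X^{(-j)}}_\infty \le n\bm\alpha_j = \mathrm O(n\sqrt{\log p})$ for the prescribed choice of $\bm\alpha_j$, while $\*z_j^\top\*X_{.,j}/n$ is bounded below by a constant with high probability (this is where a compatibility/RE-type condition on $\*X$ enters and why $\kappa$ appears); and (ii) the multi-task Lasso oracle inequality — Theorem 3.1 of \citet{Lounici_Pontil_vandeGeer_Tsybakov11} under assumption A1 with $\lambda$ large enough — yields $\normin{\*B-\hat{\*B}^{\rm MTL}}_{2,1} = \mathrm O(s\lambda/\kappa^2)$ with high probability. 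Combining, $\normin{\bm\Delta_{j,.}} = \mathrm O(\sqrt n\cdot\sqrt{\log p}\cdot s\lambda/\kappa^2)$ uniformly in $j$, but one must be careful that the $\sqrt n$ and the $1/n$ from $\*z_j^\top\*X_{.,j}$ combine correctly and that summing the $\normin{\bm\Delta_{j,.}}$ over $j\in[p]$ does not reintroduce a spurious factor of $p$ — the resolution is that the per-row bound involving $\normin{\*B-\hat{\*B}^{\rm MTL}}_{2,1}$ already captures the global sparsity, so one bounds $\norm{\bm\Delta}_{2,1}$ row-by-row against the same quantity only on the rows where it is needed, exactly as in the single-task case. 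All "high probability" statements (the RE event, the node-wise Lasso residual control, the oracle inequality) are intersected at the end, absorbing the constants into the $\mathrm O(\cdot)$.
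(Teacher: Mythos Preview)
Your proposal is correct and mirrors the paper's proof essentially step for step: the same algebraic decomposition from \Cref{eq:debiasied_explained}, the same direct covariance computation for $\bm\Lambda_{j,.}$ (the paper computes $\bbE[\*E^\top\*z_j\*z_j^\top\*E]$ entrywise using \Cref{eq:noise_law_2}), and the same H\"older-type bound $\normin{\bm\Delta_{j,.}}\le\sqrt{n}\max_{k\neq j}|\*P_{j,k}|\cdot\normin{\hat{\*B}^{\rm MTL}-\*B}_{2,1}$, after which the paper cites \citet[Lemma~2.1]{buhlmann2011statistics} and \citet[App.~A.1]{Dezeure_Buhlmann_Meier_Meinshausen15} for $\sqrt{n}\max_{k\neq j}|\*P_{j,k}|=\mathrm O(\sqrt{\log p})$ and \citet[Thm.~3.1]{Lounici_Pontil_vandeGeer_Tsybakov11} for $\normin{\hat{\*B}^{\rm MTL}-\*B}_{2,1}=\mathrm O(s\lambda/\kappa^2)$. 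Two small calibrations: in the paper the factor $\kappa^{-2}$ enters only through the MTLasso oracle inequality, not through the score-vector denominator; and your concern about an extra factor of $p$ when passing from the per-row bound to $\norm{\bm\Delta}_{2,1}$ is legitimate --- the paper's proof does not address it either, and the argument as written really controls $\max_j\normin{\bm\Delta_{j,.}}$, which is in fact what is used downstream (cf.\ the remark on $\norm{\bm\Delta}_\infty$ immediately after the proof).
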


Then, under the $j$-th null hypothesis $H^{(j)}_0$ : ``$\*B_{j,.}=0$''
and neglecting the term $\bm\Delta$
(see \Cref{sub:proof_of_prop:desparsified_mtlasso} for more details)
in \eqref{eq:delta} as done by \citet{vandeGeer_Buhlmann_Ritov_Dezeure14},
$\hat{\*B}_{j,.}^{(\rm\dMTLasso)}$ is Gaussian with zero-mean.
Finally, using standard results on $\chi^2$ distributions (see
\Cref{sub:probability_lemma}), we obtain
\begin{align*}
    n \norm{\hat{\*B}_{j,.}^{(\rm \dMTLasso)}}_{\*M^{-1}}^2
    \sim \hat{\bm\Omega}_{j, j} \chi^2_T \enspace.
\end{align*}
If $\*M$ is known, the quantity
$n \normin{\hat{\*B}_{j,.}^{(\rm \dMTLasso)}}_{\*M^{-1}}^2  / \hat{\bm\Omega}_{j, j}$
can be used as a decision statistic to obtain
a $p$-value testing the importance of source $j$
by comparison with the $\chi^2_T$ distribution.
In practice we need to estimate $\*M$ by $\hat{\*M}$.  Notably,
assuming that we have an estimator $\hat{\sigma}$ of $\sigma$ that verifies
approximately $(n - \hat{s})\hat{\sigma}^2/\sigma^2 \sim \chi^2_{n  - \hat{s}}$,
where $\hat{s}=|\mathrm{Supp}(\hat{\*B}^{\rm{MTL}})|$ (see
\Cref{subsec:method}), we take
\begin{align}\label{eq:inference_stat}
	\hat{f}_j:=
  \frac{n \normin{\hat{\*B}_{j,.}^{(\rm \dMTLasso)}}_{\hat{\*M}^{-1}}^2}
  {T \, \hat{\bm\Omega}_{j, j}}\enspace,
\end{align}
as statistic to compare with a Fisher distribution with parameters $T$ and $n - \hat{s}$, to compute the $p$-values.
The full \dMTLasso algorithm is given in \Cref{alg:dMTLasso}.
Note that, a Python implementation of the procedures presented in this paper
is available on \url{https://github.com/ja-che/hidimstat} along with some
examples.
{\fontsize{4}{4}\selectfont
\begin{algorithm}[t]
\SetKwInOut{Input}{input}
\SetKwInOut{Init}{init}
\SetKwInOut{Parameter}{param}
\caption{{\dMTLasso}
}
\Input{$
    \*X \in \bbR^{n \times p}, \*Y$}

    $\hat{\*B}^{\rm{MTL}} \leftarrow {\rm MTL}(\*X, \*Y) $
    \tcp*{cross-validated multi-task Lasso}

    $\hat{\*E} \leftarrow \*Y-\*X\hat{\*B}^{\rm{MTL}}$
    \tcp*{Residuals}

	$\hat{s} \leftarrow |\mathrm{Supp}(\hat{\*B}^{\rm{MTL}})|$

	\For(\tcp*[f]{Noise level estimation}){ $t\in[T]$}
	{
	$\hat{\sigma}^2_t = \normin{\hat{\*E}_{.,t}}^2 / (n - \hat{s})$
	}

    $\hat{\sigma}^2 = \mathrm{median}(\{\hat{\sigma}_t^2, t \in [T]\})$

    Get $\hat{\*M} $ thanks to \Cref{subsec:method}

	\For{ $j\in[p]$}{

	$\*z_j \leftarrow {\rm Lasso}(\*X^{(-j)}, \*X_{.,j})$
    \tcp*{cross-validated Lasso}

	$\hat{\bm\Omega}_{j, j} \leftarrow \frac{n \*z_{j}^\top \*z_{j}} {|\*z_{j}^\top \*X_{.,j}| |\*z_{j}^\top \*X_{.,j}|}$

	$\hat{\*B}_{j,.}^{(\rm\dMTLasso)}
	\leftarrow
	\frac{\*z_{j}^\top \*Y}{\*z_{j}^\top \*X_{.,j}}
	- \sum_{k \neq j} \frac{\*z_{j}^\top \*X_{.,k} \hat{\*B}^{\rm{MTL}}_{k,.}}{\*z_{j}^\top \*X_{.,j}}$
    \tcp*{Desparsified multi-task Lasso}

	$\hat{f}_j \leftarrow \frac{n \normin{\hat{\*B}_{j,.}^{(\rm \dMTLasso)}}_{\hat{\*M}^{-1}}^2} {T \, \hat{\bm\Omega}_{j, j}}$
    \tcp*{Inference statistics}
	}

\Return{$\hat{f}_1,\dots,\hat{f}_p$}
\label{alg:dMTLasso}
\end{algorithm}

}
%
\subsection{Noise parameters estimation}
\label{subsec:method}

In \Cref{subsec:model} noise is assumed homogeneous across sensors, allowing to obtain a robust estimator.
Extending \citet{Reid_Tibshirani_Friedman16} to multi-task regression, we consider the residuals $\hat{\*E} = \*Y - \*X \hat{\*B}^{\rm{MTL}}$,
and the estimated support size $\hat{s}$.
Defining, for $t \in [T]$,
$\hat{\sigma}^2_t = \normin{\hat{\*E}_{.,t}}^2 / (n - \hat{s})$,
an estimate of $\sigma^2$ is:
\begin{align*}
    \hat{\sigma}^2 = \mathrm{median}(\{\hat{\sigma}^2_t, t \in [T]\})\enspace.
\end{align*}
Taking the median instead of the mean avoids depending on prospective
under-fitted time steps and turns out to be more robust empirically.
Similarly, defining for all $t \in [T - 1]$,
$\hat{\rho}_t = \cor_n(\hat{\*E}_{.,t}, \hat{\*E}_{.,t+1})$ (where $\cor_n(., .)$ is the empirical correlation), $\rho$ is estimated by taking
$\hat{\rho} = \mathrm{median}(\{\hat{\rho}_t, t \in [T - 1]\})$.
Then, an estimator $\hat{\*M}$ of $\*M$ is given by
$\hat{\*M}_{t,u} = \hat{\sigma}^2 \hat{\rho}^{|t-u|}$.
%
\subsection{Clustering to handle spatially structured high-dimensional data}
\label{sub:clustering_to_handle_structured_high_dimensional_data_}
%
In the high-dimensional inference scenario considered, the number of
sensors is more than one order of magnitude smaller than the number
of sources, $n \ll p$.
Therefore, estimators of conditional association between
sources and observations struggle to identify the solution.
The setting is even more difficult due to the presence
of high correlation between sources (see \Cref{fig:correlation_data} in
appendix).
Further gains can however come from a compression of the design matrix \citep{Buhlmann:2013,Mandozzi:2016}.
For this we introduce a clustering step that reduces data
dimensionality while leveraging spatial structure.
We consider a spatially-constrained hierarchical clustering
algorithm described by \citet{Varoquaux_Gramfort_Thirion12}
that uses Ward criterion\footnote{A typical
choice is $C=1000$ clusters for M/EEG data.}.
Other clustering schemes might be considered, as long as
they yield spatially contiguous regions of the cortical surface.
The combination of this clustering algorithm with the \dLasso or \dMTLasso algorithms will be respectively referred to as clustered desparsified Lasso (\cdLasso) and clustered desparsified multi-task Lasso (\cdMTLasso).

The number of clusters is denoted by $C$ and,
for $r \in [C]$, we denote by $G_r$ the $r$-th group.
Every cluster representative variable is given by the average of the
covariates it contains.
Then, reordering conveniently the columns of $\*X$, the compressed design matrix $\*Z \in \bbR^{n \times C}$ is given by:
%
\begin{align}\label{eq:clustering}
\*Z = \*X\*A \enspace , ~~
 ~~ \*A = \left[
\begin{matrix}
\frac{1}{|G_1|} & \horzbar & \frac{1}{|G_1|} & 0 & \horzbar & 0 &\ldots & 0 & \horzbar & 0\\
0 & \horzbar & 0  & \frac{1}{|G_2|} & \horzbar & \frac{1}{|G_2|} & \ldots & 0 & \horzbar & 0\\
\vdots & \vdots & \vdots & \vdots & \vdots & \vdots & \ddots & \vdots  & \vdots & \vdots \\
0 & \horzbar & 0  & 0 & \horzbar & 0 & \ldots & \frac{1}{|G_r|} & \horzbar & \frac{1}{|G_r|}\\
\end{matrix}
\right] \enspace ,
\end{align}
where $\*A \in \bbR^{p \times C}$.
%
We say that the compression of $\*X$ is of good quality if:

(A2) there exists $\bm\Gamma \in \bbR^{C \times T}$
such that $\bm\Gamma_{r, .} = \sum_{j \in G_r}{w_{j}\*B_{j, .}}$
with $w_{j} \geq 0$ for all $j \in [p]$, and the associated
compression loss $\*X \*B - \*Z\bm\Gamma$ is "small enough"
with respect to the model noise
(see \Cref{sub:proof_of_prop_cdMTL} for more details).

(A3)\footnote{$|\mathrm{Supp}(\bm\Gamma)| \leq |\mathrm{Supp}(\*B)|$
and $\*Z$ is generally better conditioned than $\*X$ making
A3 more plausible than A1.}
RE($\*Z,s^\prime$) is verified on $\*Z$ for sparsity parameter
$s^{\prime} \geq |\mathrm{Supp}(\bm\Gamma)|$ and constant
$\kappa^{\prime} = \kappa^{\prime}(s^{\prime}) > 0$.

\begin{prop}\label{prop:cdMTL}
Assume \Cref{eq:noise_model},  A2, A3,
a choice of regularization parameter in the MTLasso regression of
$\*Z$ against $\*Y$  that is large enough, and that the largest cluster of
the compression is of size $\delta$, then \cdMTLasso controls the
$\delta$-FWER.

\end{prop}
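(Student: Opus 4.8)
The plan is to reduce the $\delta$-FWER control of \cdMTLasso to the Gaussian/$\chi^2$ behavior already established for \dMTLasso in \Cref{prop:desparsified_mtlasso}, applied to the compressed model. First I would write down the regression of $\*Y$ against the compressed design $\*Z = \*X\*A$. Using A2, we can decompose $\*X\*B = \*Z\bm\Gamma + \*R$ where $\*R = \*X\*B - \*Z\bm\Gamma$ is the compression residual, so that the observed data satisfy $\*Y = \*Z\bm\Gamma + (\*R + \*E)$. The key point is that under A2 the term $\*R$ is "small enough" relative to the noise $\*E$, so that up to a controlled perturbation we are in the exact linear model $\*Y = \*Z\bm\Gamma + \*E$ with the same autocorrelated Gaussian noise law \eqref{eq:noise_law_2}, and with A3 playing for $\*Z$ the role A1 played for $\*X$. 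Applying \Cref{prop:desparsified_mtlasso} to this compressed model gives, with high probability, that $\sqrt{n}(\hat{\bm\Gamma}^{(\rm\dMTLasso)} - \bm\Gamma) = \bm\Lambda' + \bm\Delta'$ with the $\bm\Lambda'$ rows Gaussian with the stated covariance and $\norm{\bm\Delta'}_{2,1}$ of order $s'\lambda\sqrt{\log C}/\kappa'^2$, and that the associated statistics $\hat f_r$ are (asymptotically / up to the neglected $\bm\Delta'$ and the noise-variance estimation) distributed as Fisher variables under the null $\bm\Gamma_{r,.} = 0$.

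Next I would translate a detection in cluster space back to source space and pin down where false positives can occur. A cluster $G_r$ is "truly null" precisely when $\bm\Gamma_{r,.} = \sum_{j\in G_r} w_j \*B_{j,.} = 0$; since the $w_j \geq 0$ are nonnegative, by A2 this forces (on the relevant coordinates) $\*B_{j,.} = 0$ for every $j \in G_r$ that carries positive weight, i.e.\ every active source in $G_r$ has $w_j = 0$. More to the point, if $G_r$ contains \emph{no} element of $\mathrm{Supp}(\*B)$ then $\bm\Gamma_{r,.} = 0$, so that cluster is null; conversely if $\hat f_r$ exceeds the testing threshold we "reject" cluster $G_r$ and thereby flag all sources in $G_r$. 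A false positive in the $\delta$-FWER sense is a flagged source lying at distance $> \delta$ from $\mathrm{Supp}(\*B)$. The crucial geometric observation is that, because the clusters are spatially contiguous and the largest cluster has diameter at most $\delta$ (the hypothesis "largest cluster of size $\delta$"), any cluster $G_r$ that contains a source more than $\delta$ away from $\mathrm{Supp}(\*B)$ cannot also contain a point of $\mathrm{Supp}(\*B)$ — otherwise two points of $G_r$ would be more than $\delta$ apart. Hence such a $G_r$ is disjoint from $\mathrm{Supp}(\*B)$, so $\bm\Gamma_{r,.} = 0$ and $H_0^{(r)}$ holds. Therefore every $\delta$-false-positive event in source space is contained in the event that some truly null cluster is rejected.

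Finally I would invoke the multiple-testing correction. Since for each truly null $r$ the statistic $\hat f_r$ is (up to the neglected terms, as in the discussion following \Cref{prop:desparsified_mtlasso}) stochastically dominated by a Fisher$(T, n-\hat s)$ variable, applying a Bonferroni/Holm correction over the $C$ clusters to the resulting $p$-values controls $\Pr[\exists r \text{ truly null}: G_r \text{ rejected}]$ at the nominal level; combined with the inclusion of events from the previous paragraph, this yields $\delta$-FWER control for the source-space output of \cdMTLasso. The main obstacle is the first step: making rigorous the passage from "$\*Y = \*Z\bm\Gamma + \*E$ with residual $\*R$" to the exact hypotheses of \Cref{prop:desparsified_mtlasso} — one must show the compression loss $\*R$ perturbs the score vectors $\*z_r$, the MTLasso preliminary estimate, and hence the statistics $\hat f_r$ only by an amount that, under the "small enough" quantification of A2 (detailed in \Cref{sub:proof_of_prop_cdMTL}), is absorbed into the already-neglected bias term $\bm\Delta'$ and does not inflate the null distribution. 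The geometric/combinatorial step is, by contrast, a short argument once the "size $\delta$" hypothesis is read as a diameter bound.
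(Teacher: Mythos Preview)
Your proposal is correct in outline and follows essentially the same route as the paper: apply the \dMTLasso analysis to the compressed model $\*Y = \*Z\bm\Gamma + (\*X\*B - \*Z\bm\Gamma) + \*E$, argue the compression residual is negligible under A2, obtain calibrated cluster-wise $p$-values under A3, Bonferroni-correct over the $C$ clusters, and then use the diameter bound to conclude that any rejected null cluster lies within $\delta$ of the support.

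The one technical difference worth noting is in how the compression residual is handled. You propose to absorb $\*R = \*X\*B - \*Z\bm\Gamma$ into the bias term $\bm\Delta'$, and you suggest it perturbs the score vectors and the MTLasso preliminary estimate. The paper instead keeps it as a separate, explicit third summand
\[
\bm\Pi_{r,.} \;=\; \frac{\sqrt{n}\,\*a_r^\top(\*X\*B - \*Z\bm\Gamma)}{\*a_r^\top \*Z_{.,r}}
\]
in the decomposition $\sqrt{n}(\hat{\bm\Gamma}_{r,.}^{(\rm\dMTLasso)} - \bm\Gamma_{r,.}) = \bm\Lambda'_{r,.} + \bm\Delta'_{r,.} + \bm\Pi_{r,.}$, and bounds its contribution to the test statistic directly via $n\norm{\bm\Pi_{r,.}}_{\*M^{-1}}^2 / (T\,\hat{\bm\Omega}'_{r,r}) \leq \xi$ using the quantitative form of A2. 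This is cleaner than your sketch: the score vectors $\*a_r$ depend only on $\*Z$, not on $\*Y$, so they are not perturbed by $\*R$ at all; the residual enters the analysis purely through this additive $\bm\Pi$ term, which is controlled deterministically rather than being folded into the high-probability bound on $\bm\Delta'$. Your geometric step (diameter argument linking cluster-level nulls to $\delta$-FWER in source space) is, as you say, the easy part and matches the paper's.
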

%
\subsection{Ensemble of clustered desparsified multi-task Lasso (\ecdMTLasso)}
\label{sub:ensembling_step}
%
\begin{figure}
  \centering
  \includegraphics[width=0.75\textwidth]{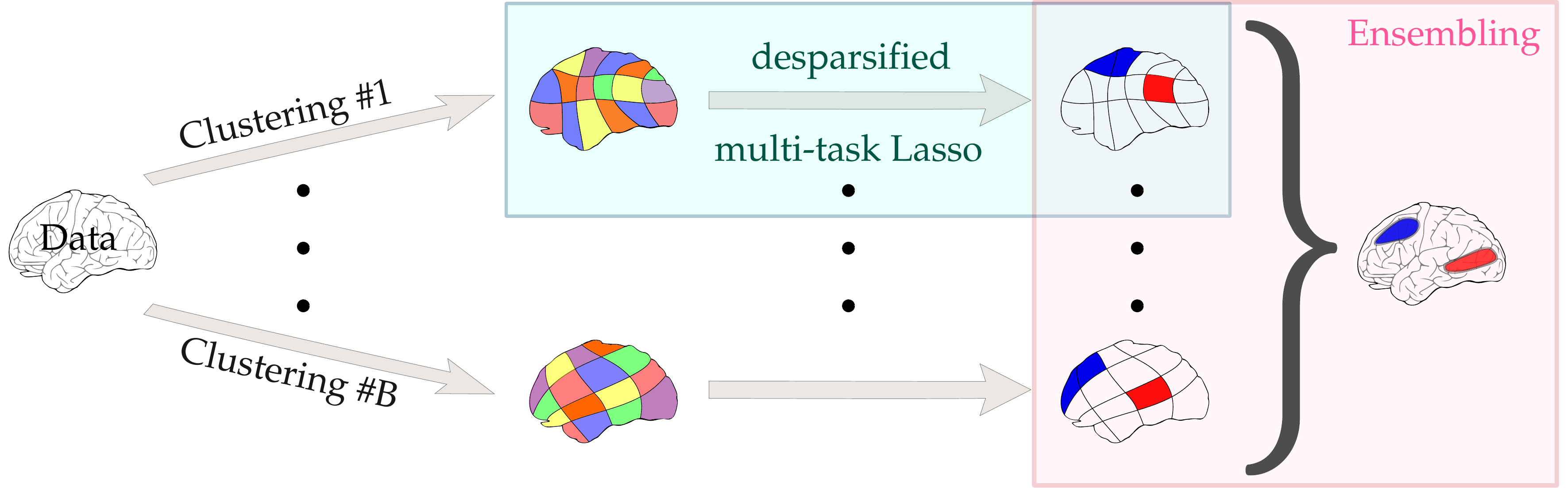}
  \caption{\textbf{ecd-MTL overview diagram.}
  While \cdMTLasso applies \dMTLasso to clustered data,
  \ecdMTLasso aggregates several \cdMTLasso solutions.
  \label{fig:ecdMTL_diagram}
  }
\end{figure}
To reduce the sensitivity of \cdMTLasso to small data perturbations,
we propose to randomize over the clustering.
We build several clustering solution, considering $B=100$ different
random subsamples of size $10\%$ of the full sample;
then we aggregate the $p$-value maps output by \cdMTLasso.
To aggregate the $B$ \cdMTLasso solutions, we use the adaptive
quantile aggregation proposed by \citet{Meinshausen_Meier_Buhlmann09}
detailed in \Cref{sec:complement}.
The full procedure of ensembling $B$ \cdMTLasso (resp. \cdLasso),
solutions is called \ecdMTLasso for ensemble of clustered desparsified
multi-task Lasso (resp. \ecdLasso).
Algorithm of \ecdMTLasso is given in \Cref{alg:ecdMTLasso} in appendix.
Also, we give an overview diagram to clarify the nesting structure of
the proposed solutions in \Cref{fig:ecdMTL_diagram}.

\begin{prop}\label{prop:ecdMTL}
Assume that for each of the $B$ compressions the hypotheses
of \Cref{prop:cdMTL} are verified, then \ecdMTLasso controls the $\delta$-FWER.
\end{prop}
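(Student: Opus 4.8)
The plan is to bootstrap the single‑clustering guarantee of \Cref{prop:cdMTL} through the aggregation step, using two structural facts about the adaptive quantile aggregation of \citet{Meinshausen_Meier_Buhlmann09}: (i) it turns any family of (marginally) valid $p$-values into a single valid $p$-value \emph{without any independence assumption}, and (ii) the aggregation functional is nondecreasing in each of its arguments. Here ``$P$ is a valid $p$-value'' means $\Pr(P\le u)\le u$ for all $u\in[0,1]$.

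First I would fix a nominal level $\alpha\in(0,1)$ and set $N_\delta=\{\,j\in[p]:\ \mathrm{dist}(j,\mathrm{Supp}(\*B))>\delta\,\}$, the set of sources a $\delta$-FWER–controlling procedure is forbidden to reject. For the $b$-th subsample let $\*Z^{(b)}=\*X\*A^{(b)}$ be the associated compressed design and $\tilde p^{(b)}_1,\dots,\tilde p^{(b)}_p$ the source‑level $p$-value map produced by \cdMTLasso, i.e.\ the Bonferroni‑corrected cluster $p$-values decompressed to the $p$ sources through $\*A^{(b)}$. By hypothesis every one of the $B$ compressions satisfies A2 and A3 and has largest cluster of size at most $\delta$; hence every $j\in N_\delta$ lands, in run $b$, in a cluster disjoint from $\mathrm{Supp}(\*B)$, so \Cref{prop:cdMTL} applies to run $b$. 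Since ``run $b$ rejects some $j\in N_\delta$ at level $u$'' is exactly the event $\{\min_{j\in N_\delta}\tilde p^{(b)}_j\le u\}$, the $\delta$-FWER control of \Cref{prop:cdMTL} says precisely that $m^{(b)}:=\min_{j\in N_\delta}\tilde p^{(b)}_j$ is a valid $p$-value, for each $b\in[B]$.

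Next I would push these through the aggregation. Write $g:[0,1]^B\to[0,1]$ for the adaptive quantile aggregation functional recalled in \Cref{sec:complement}, so the \ecdMTLasso map is $\bar p_j=g(\tilde p^{(1)}_j,\dots,\tilde p^{(B)}_j)$. Since $g$ is nondecreasing coordinatewise and $\tilde p^{(b)}_j\ge m^{(b)}$ for every $j\in N_\delta$ and every $b$, we get $\bar p_j\ge g(m^{(1)},\dots,m^{(B)})=:\bar m$ for all $j\in N_\delta$, hence $\min_{j\in N_\delta}\bar p_j\ge\bar m$ almost surely. On the other hand $\bar m$ is the aggregation of the $B$ valid $p$-values $m^{(1)},\dots,m^{(B)}$, which may be arbitrarily dependent (they come from overlapping subsamples of the same data); by the aggregation result of \citet{Meinshausen_Meier_Buhlmann09}, $\bar m$ is itself a valid $p$-value, so $\Pr(\bar m\le\alpha)\le\alpha$. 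Combining, $\Pr(\exists\,j\in N_\delta:\ \bar p_j\le\alpha)=\Pr(\min_{j\in N_\delta}\bar p_j\le\alpha)\le\Pr(\bar m\le\alpha)\le\alpha$, which is exactly $\delta$-FWER control for \ecdMTLasso (the argument for \ecdLasso being identical).

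The only genuinely delicate point is the dependence between the $B$ clustered solutions: independence is hopeless and a naive union bound over runs would cost a factor $B$. The reason the argument loses nothing is structural — the Meinshausen–Meier–Bühlmann aggregation is built to return a valid $p$-value under arbitrary dependence, and the extra $(1-\log\gamma_{\min})$ factor in its definition is precisely what pays for it. A secondary point to keep straight is that the forbidden set $N_\delta$ must be common to all $B$ runs; this is guaranteed by the standing assumption that every compression has largest cluster of size $\le\delta$, which is what lets each decompressed $\tilde p^{(b)}_j$ with $j\in N_\delta$ inherit super‑uniformity from \Cref{prop:cdMTL}.
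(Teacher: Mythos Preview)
Your argument is correct and follows the same route as the paper: invoke \Cref{prop:cdMTL} for each of the $B$ compressions, then appeal to the aggregation guarantee of \citet{Meinshausen_Meier_Buhlmann09}. The paper's proof is terser---it simply cites \citep[Theorem~3.2]{Meinshausen_Meier_Buhlmann09} to conclude that the aggregated corrected $p$-value family inherits $\delta$-FWER control---whereas you unpack \emph{why} that theorem transfers from ordinary FWER to $\delta$-FWER: you pass to $m^{(b)}=\min_{j\in N_\delta}\tilde p_j^{(b)}$, use coordinatewise monotonicity of the aggregation functional to bound $\min_{j\in N_\delta}\bar p_j$ below by the aggregate of the $m^{(b)}$, and then invoke the single-hypothesis validity lemma underlying the MMB result. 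This is a welcome clarification, since $\delta$-FWER is not literally the object in \citep{Meinshausen_Meier_Buhlmann09}; your reduction makes explicit that $\delta$-FWER is just FWER restricted to the fixed null set $N_\delta$, so the dependence-robust aggregation applies verbatim.
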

This result is conservative and mixing several \cdMTLasso usually reduces
the spatial tolerance $\delta$.
Additional details on the procedure and computational complexity are
deferred to \Cref{subsec:computational}.

\section{Experiments}
\label{sec:expts}

In this section, we give empirical evidence of the advantages of
ecd-MTLasso for source localization.
First, in a typical point source simulation, we compare the methods
with respect to the standard PLE metric; notably, we study the effect of
i/clustering and ii/integrating time dimension.
In a second simulation with more realistic features,
we examine the $\delta$-FWER control property
and compare the support recovery properties of all methods.
Lastly, working on real MEG data, we show that, contrary to sLORETA,
ecd-MTLasso retrieves expected patterns using a universal threshold.

\subsection{Simulation study}
\label{subsec:sparse_simu}
%
Here, we study how the proposed estimators perform compared
to standard $\ell_2$ regularized approaches, and assess whether
time-aware statistical analysis improves upon static d-Lasso
as it is essential for M/EEG source imaging.
We use the head anatomy and the recording setup from the \emph{sample}
dataset publicly available from the MNE software~\citep{mne}.
The design matrix $\*X$ is computed with a three-shell boundary element
model with $p=7498$ candidate cortical locations, and a 306-channels
Elekta Neuromag Vectorview system with 102 magnetometers and 204 gradiometers.
We only keep the gradiometers and remove one defective sensor
leading to $n = 203$.
When considering multiple consecutive time instants to demonstrate
the ability of the solver to leverage spatio-temporal data, the source
is fixed and the temporal noise autocorrelation is set to $\rho = 0.3$.

%

%
%
\Cref{fig:metrics_hist_sparse_simu} reports the normalized histograms of PLE for
the 7498 locations for the different methods investigated; results on spatial
dispersion (SD) are available in \Cref{fig:metrics_sd_hist_sparse_simu}
in appendix.
While it might seem simplistic to consider a single source, this
experiment allows to demonstrate that d-Lasso improves over
sLORETA in the presence of noise
(see \Cref{fig:metrics_hist_sparse_simu}, left).
In the same figure, one can observe that clustering degrades this
performance, as it carries an intrinsic spatial blur.
However, even in this adversarial scenario (Dirac-like source location),
cd-Lasso and ecd-Lasso remain competitive \wrt sLORETA, avoiding
extreme PLE values.
Note that, here, a single time point was used (T=1).

The right panel in \Cref{fig:metrics_hist_sparse_simu} shows that
\dMTLasso (T=6) significantly outperforms \dLasso (T=1) in terms of PLE.
Leveraging spatio-temporal data indeed increases the signal-to-noise
ratio, which enhances spatial specificity.
Effects in terms of SD are minor (see appendix, \Cref{fig:metrics_sd_hist_sparse_simu}).
\begin{figure}
  \centering
  \includegraphics[width=0.8\textwidth]{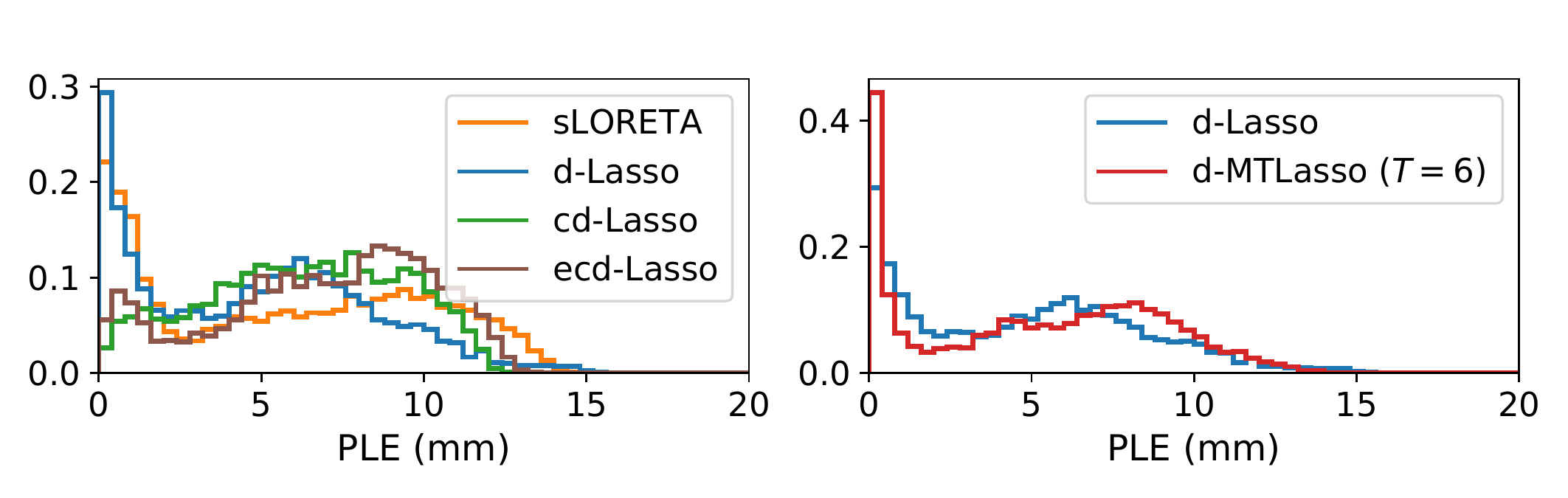}
  \caption{\textbf{Peak Localization Error (PLE) histograms.}
  (left): PLE on a fixed time point (T=1), sLORETA is outperformed by
  desparsified Lasso; cd-Lasso and ecd-Lasso are more concentrated and
  exhibit a smaller number of very low PLE but also a smaller number of
  extreme PLE values.
  (right): PLE for desparsified multi-task Lasso (d-MTLasso) with T=6 compared
  to d-Lasso (T=1). More time points improve the results by reducing the PLE.
  \label{fig:metrics_hist_sparse_simu}
  }
\end{figure}
%


%

%
\subsection{Experiments on FWER control}
\label{subsec:semi_real_simu}
%
We now investigate whether the different versions of \dMTLasso control
the $\delta$-FWER on a realistic simulation, and compare their
support recovery properties.
The data are the same as in \Cref{subsec:sparse_simu}.
To simulate the sources, we randomly draw $3$ active regions by
selecting parcels from a subdivided cortical Freesurfer parcellation
with 448 parcels \citep{KHAN201857}.
For each selected parcel we take as sources all the dipoles at
a $10$-mm geodesic distance from the center of the parcel
(around $10$ dipoles per region), fixing the amplitude at 10\,nAm.
To evaluate how the methods control the $\delta$-FWER, we perform $100$
simulations and count how often active sources
are found outside the $\delta$-dilated ground truth.
In the left panel of \Cref{fig:fwer_delta_precision_recall}, we see
that \dMTLasso does not control the $\delta$-FWER, due to the violation
of some hypotheses of proposition 1, in particular those regarding
source correlation.
However, we notice that handling noise autocorrelation
reduces the empirical $\delta$-FWER.
Using clustering, assumptions of \Cref{prop:cdMTL}
are more easily met, in
particular the conditioning of the problem is improved~\citep{MATTOUT2005356}.
Yet \cdMTLasso does not control the $\delta$-FWER for $\delta = 40$\,mm,
because the $\delta$-FWER is controlled if $\delta$ is smaller than the
largest cluster diameter, which may not hold.
Finally, randomization via \ecdMTLasso further improves FWER control.
Empirically, we observe that the $\delta$-FWER is controlled
for $\delta$ around twice the average cluster diameter.
Then, with the limitation of having a compressed design matrix well
conditioned ($C$ not too large), we can reduce the tolerance
$\delta$ by increasing $C$ (empirical support of this claim in
appendix in \Cref{fig:delta-FWER table}).
We have excluded sLORETA from this study since it
does not provide guarantees on the false discoveries.

The right panel of \Cref{fig:fwer_delta_precision_recall} shows the
$\delta$-precision recall curve of the different methods.
We first notice that \dMTLasso cannot compete with sLORETA, because the
high dimensionality of the problem makes the computation of the source
importance overly ill-posed.
\cdMTLasso improves detection accuracy, but still does not
perform as well as sLORETA.
However, adding the ensembling step, the $\delta$-precision improves
strongly, making \ecdMTLasso much better than sLORETA.
In \Cref{fig:precision_recall} in appendix, we obtain similar results
when considering the standard precision-recall curve.
\begin{figure}
  \centering
  \hfill
  \includegraphics[width=0.47\textwidth]{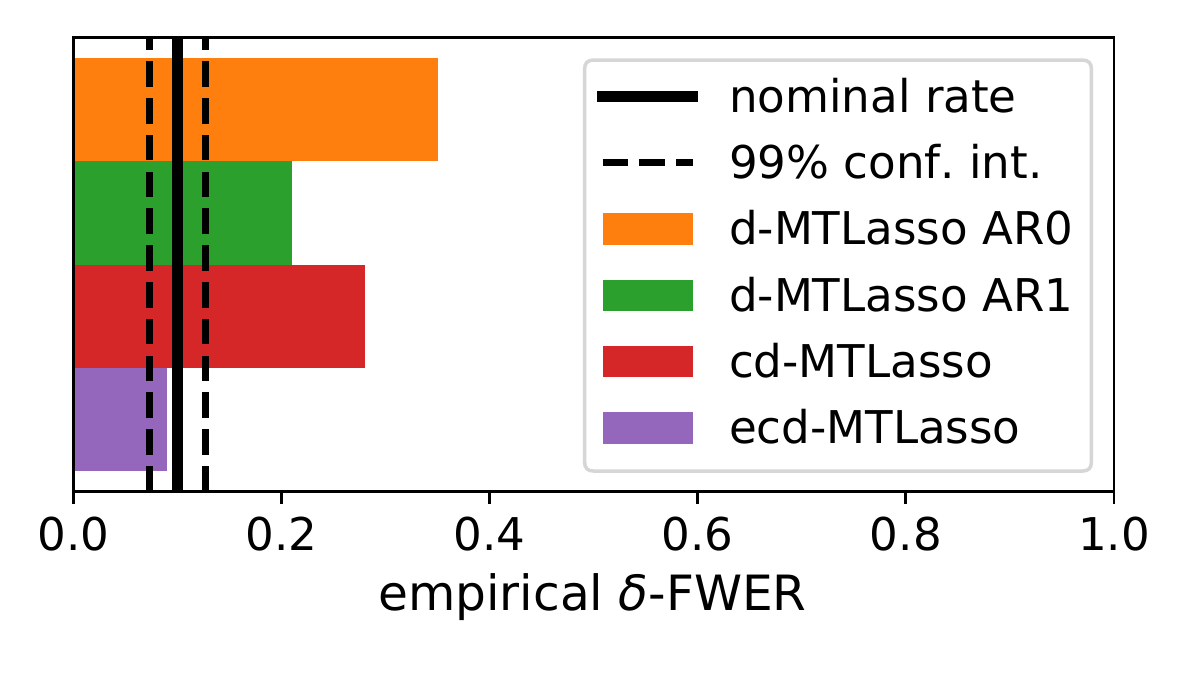}
  \hfill
  \includegraphics[width=0.47\textwidth]{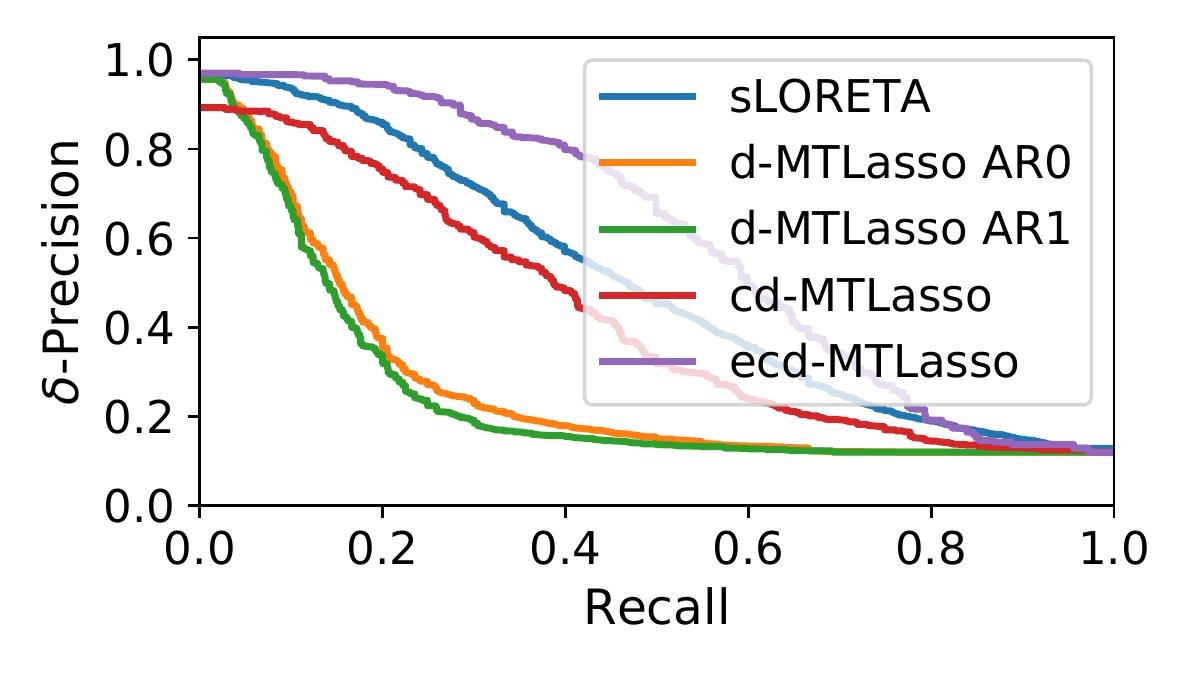}
  \hfill
  \caption{\textbf{$\delta$-FWER and $\delta$-Precision-Recall.}
  (left): $\delta$-FWER control of the different d-MTLasso methods. $\delta$-FWER control is hard for \dMTLasso and \cdMTLasso, as some detections are made far from the true sources, due to remote correlations. Ensembles of clusters allow to limit these false detections.
  (right): $\delta$-Precision-Recall curves: sLORETA outperforms d-MTLasso AR0 and AR1, because the problem is too high dimensional for the d-MTLasso to work properly. Clustering improves the outcome, and ensembling brings further benefits: \ecdMTLasso outperforms sLORETA.
  }
  \label{fig:fwer_delta_precision_recall}
\end{figure}
%
\subsection{Results on three MEG datasets}
\label{subsec:real_meg}
%
We now report results on three MEG datasets spanning three types of sensory
stimuli: auditory, visual and somatosensory.
Additional results on EEG datasets are presented in \Cref{sec:supp_fig}.
The auditory evoked fields (AEF) and visual evoked field (VEF) are obtained
using stimuli in the left ear and left visual hemifield.
The somatosensory evoked fields (SEF) are obtained following electrical
stimulation of the left median nerve on the wrist.
The detailed description of the data is provided in \Cref{sec:data}.

Experimental results are presented in
\Cref{fig:real_data_comparison} and \Cref{fig:audio_comparison}
(cf. \Cref{sec:supp_fig}).
Among the many methods for M/EEG source imaging present in the literature,
the methods that are compared here have in common to output a statistical
map.
The $\ell_2$ regularized sLORETA method is compared to the debiased
sparse estimators presented and evaluated above.
The input for all solvers is a time window of data: from $t=50$ to $t=100$\,ms
for AEF and VEF, and from $t=30$ to $t=40$\,ms for SEF.
During such time intervals one can expect the sources to originate
primarily from the early sensory cortices whose locations are
anatomically known for normal subjects.

First one can observe that all methods manage to highlight
the proper functional sensory units (planum temporale for AEF,
calcarine region for VEF and central sulcus for SEF).
Considering sLORETA results, one can observe that at a common
threshold of 3.0 on the Student statistic, the estimator is quite
spatially specific for VEF, but is overly conservative for AEF and
clearly leading to many false positives for SEF.
By inspection of the d-MTLasso solution, one can observe that taking into
account the autocorrelation of the noise leads to a better calibrated
noise variance, and therefore fewer dubious detection.
Considering ecd-MTLasso results, while all maps are also thresholded
with a single level, one can see that it retrieves expected patterns
without making dubious discoveries.

\begin{figure}
  \centering
  \includegraphics[width=0.24\textwidth]{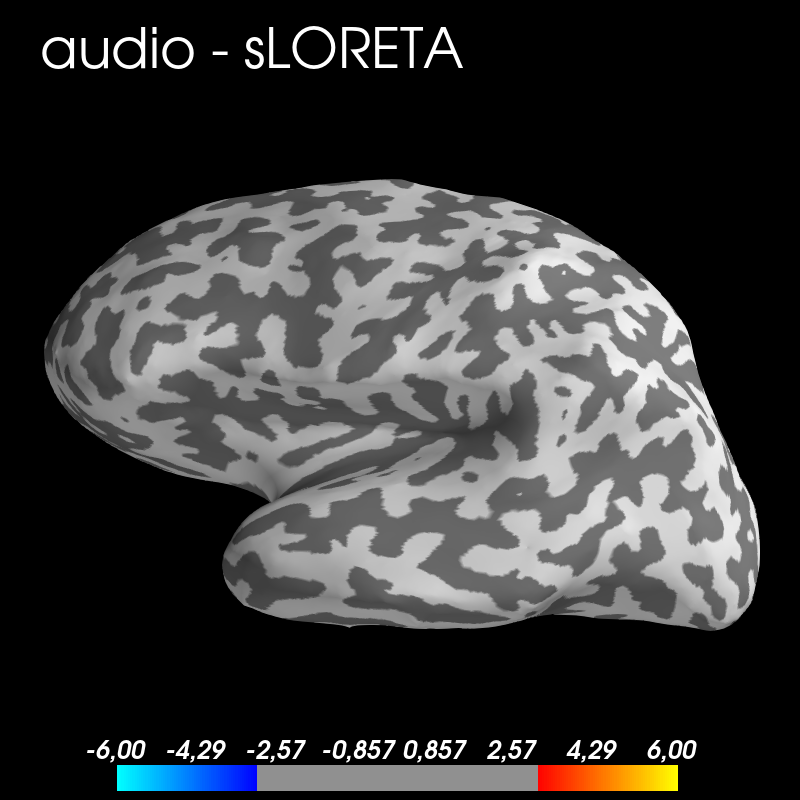}
  \includegraphics[width=0.24\textwidth]{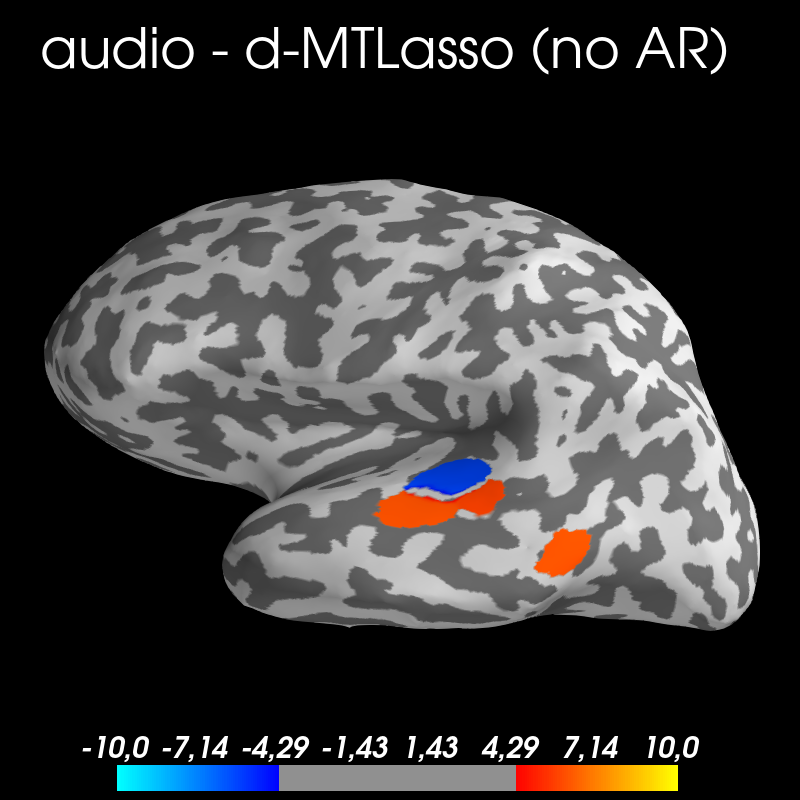}
  \includegraphics[width=0.24\textwidth]{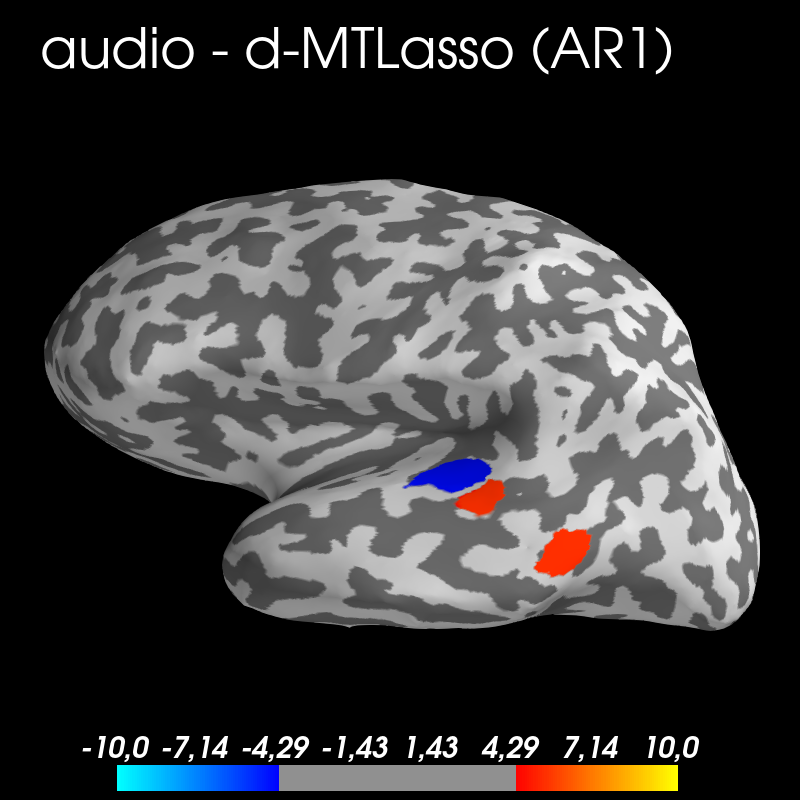}
  \includegraphics[width=0.24\textwidth]{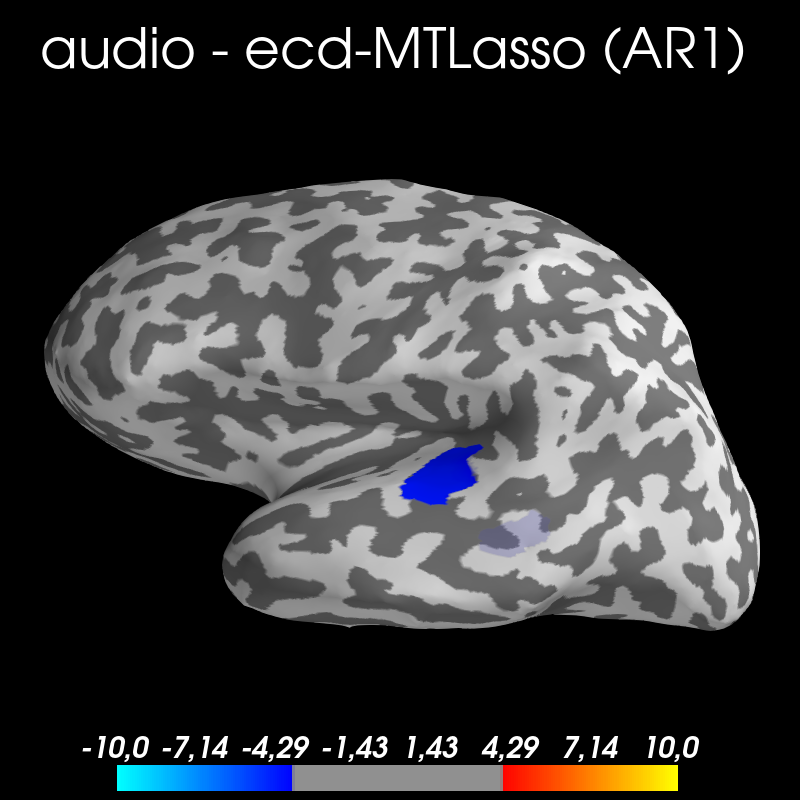}
  \includegraphics[width=0.24\textwidth]{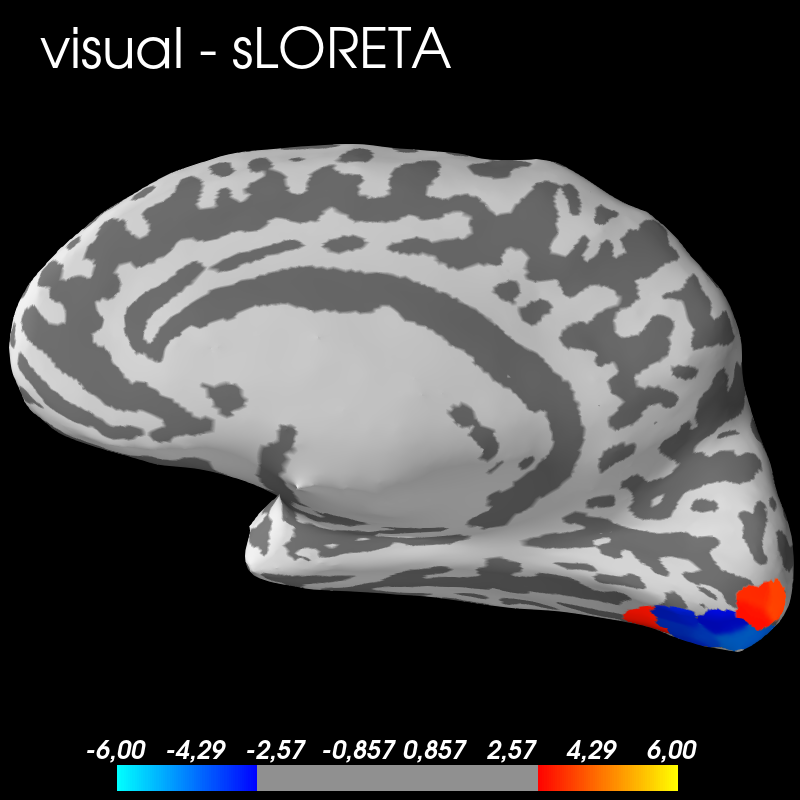}
  \includegraphics[width=0.24\textwidth]{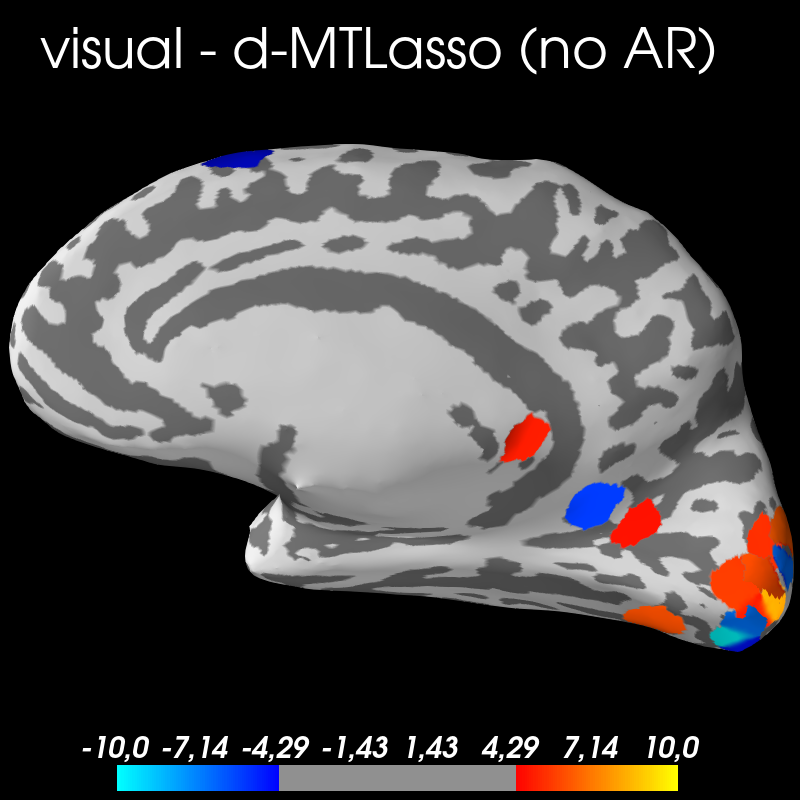}
  \includegraphics[width=0.24\textwidth]{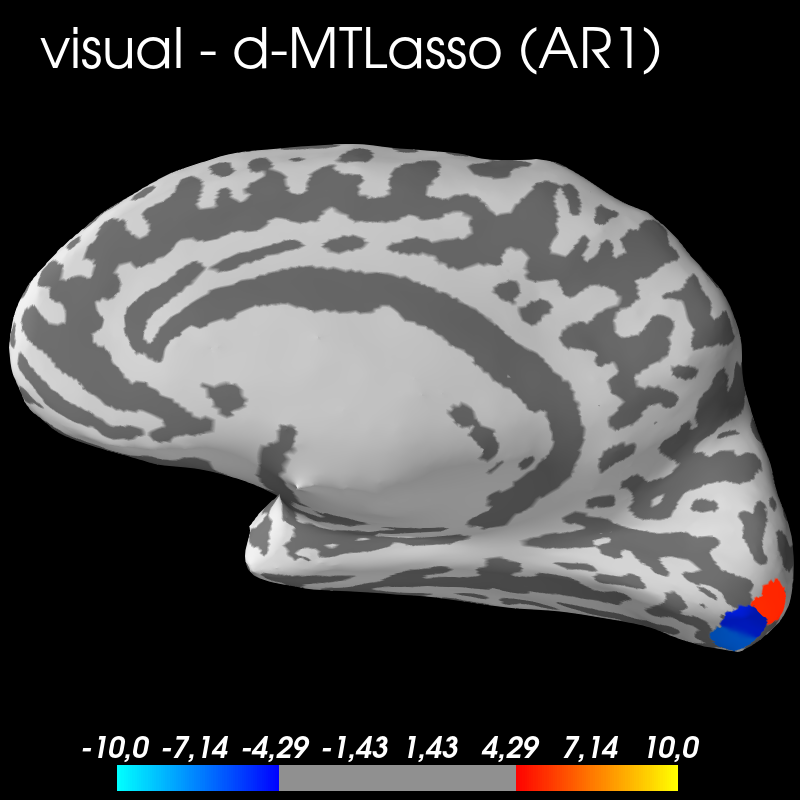}
  \includegraphics[width=0.24\textwidth]{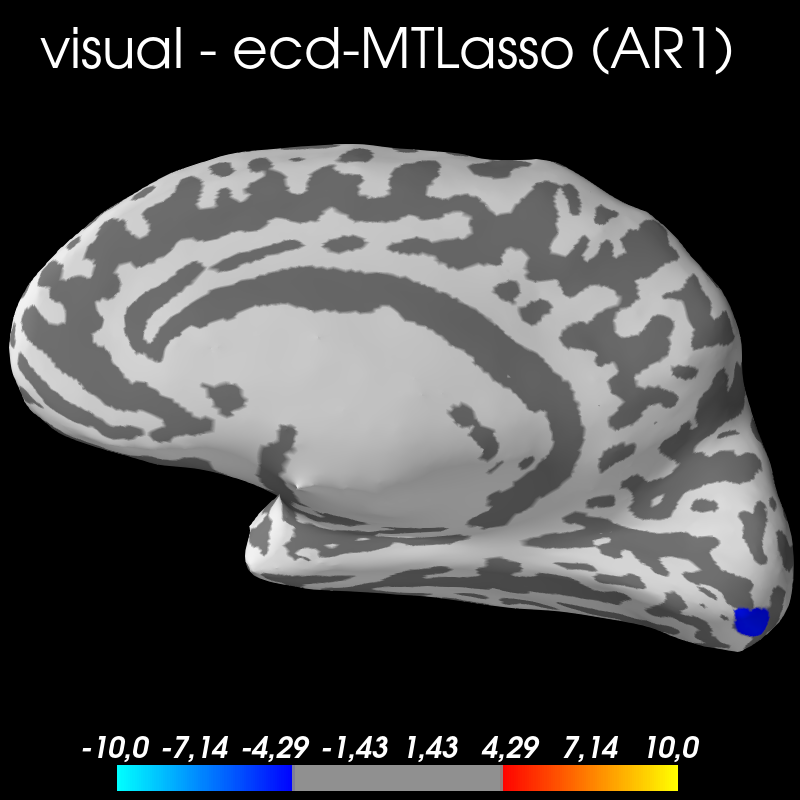}
  \includegraphics[width=0.24\textwidth]{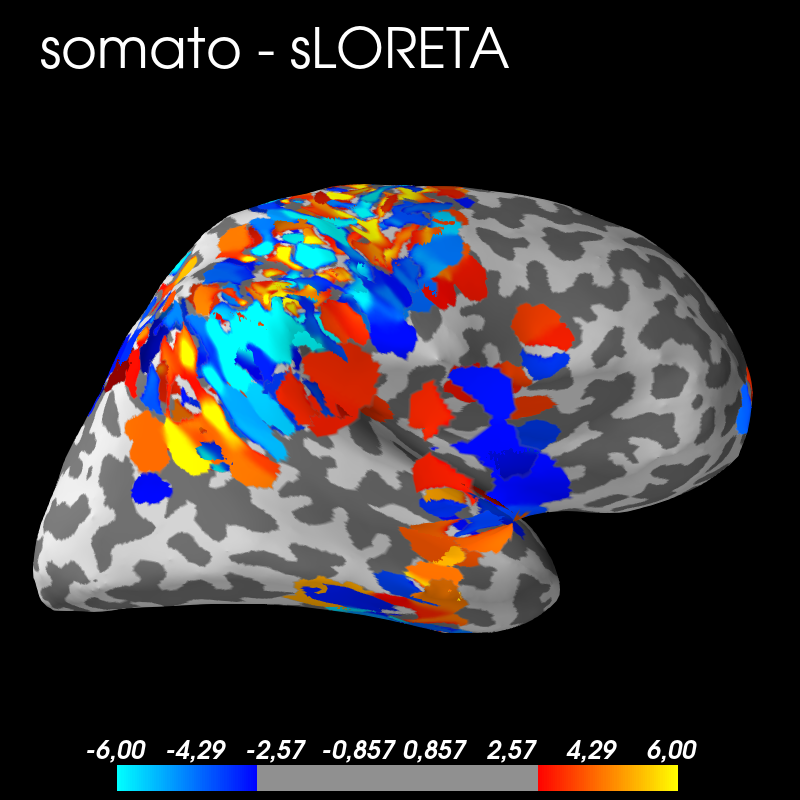}
  \includegraphics[width=0.24\textwidth]{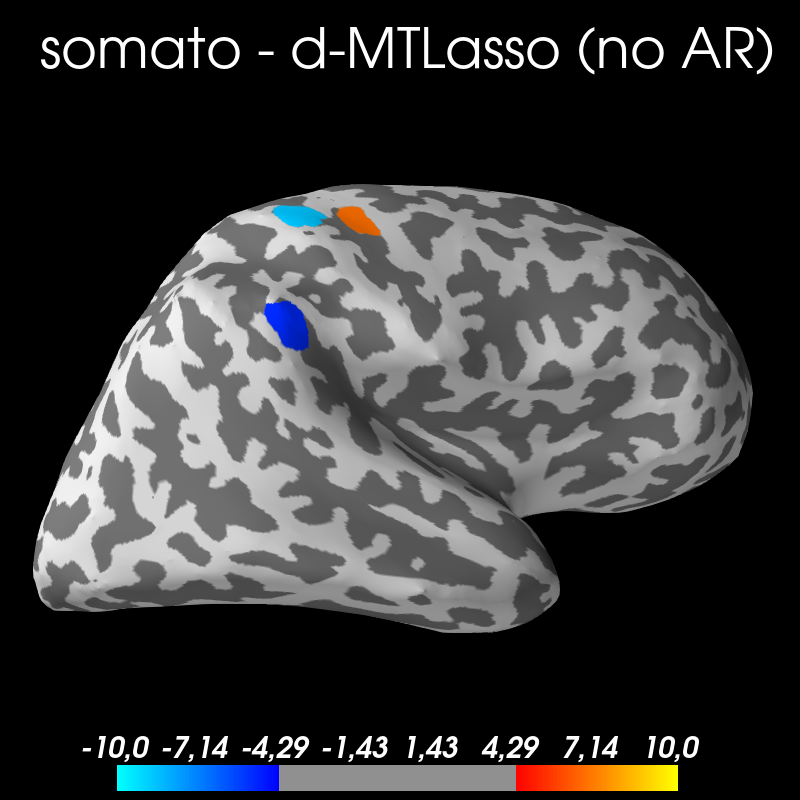}
  \includegraphics[width=0.24\textwidth]{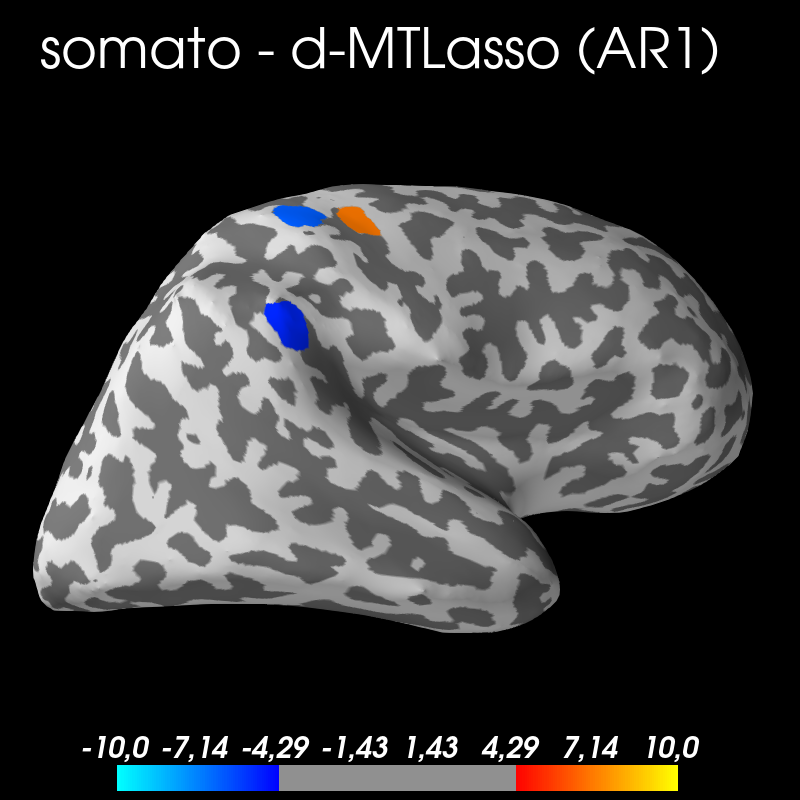}
  \includegraphics[width=0.24\textwidth]{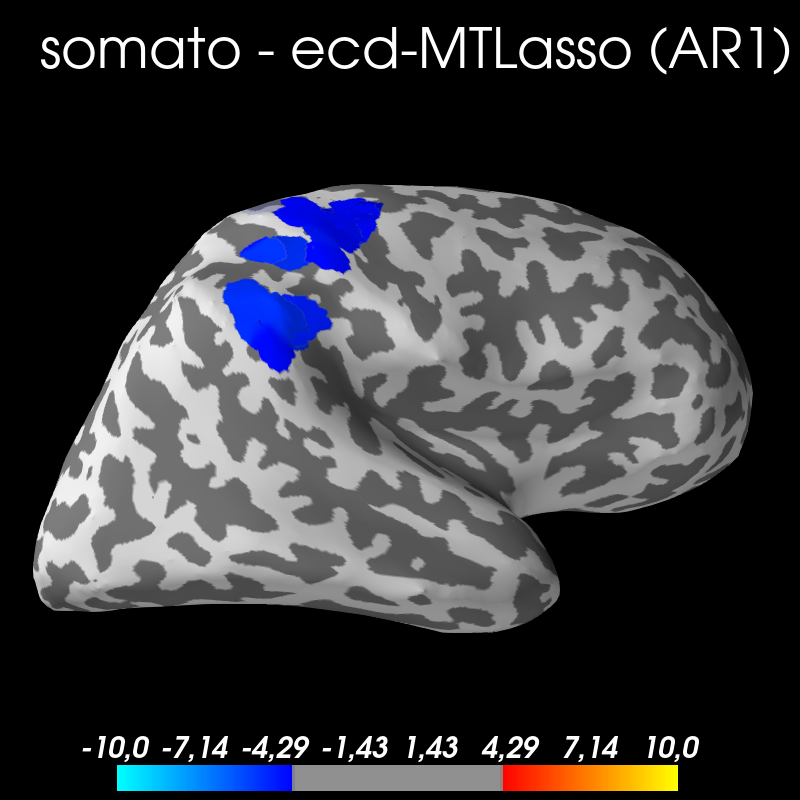}
  \caption{\textbf{Empirical comparison on 3 MEG datasets.}
  From left to right one can see sLORETA, d-MTLasso without AR modeling
  (assuming non-autocorrelated noise), d-MTLasso with an AR1 noise model
  and the ecd-MTLasso using also an AR1.
  Results correspond to auditory (top), visual (middle) and somatosensory (bottom) evoked fields.
  Colormaps are fixed across datasets and adjusted based on
  meaningful statistical thresholds in order to qualitatively
  illustrate FWER control issues.  }
  \label{fig:real_data_comparison}
\end{figure}

\subsection{Summary, guidelines and limitations}
\label{subsec:summary_exp_meg}

\paragraph{\textbf{Summary of experiments.}}
In \Cref{subsec:sparse_simu}, we have shown that taking
into account the time dimension improve the results in terms of PLE.
Also, we have seen that even in this adversarial
point source scenario (cf. \Cref{subsec:sparse_simu}),
clustered methods remain competitive.
In \Cref{subsec:semi_real_simu}, while no control of false discoveries
is proposed by sLORETA, ecd-MTL is the only method that offers
statistical control in practice.
Namely, it controls the $\delta$-FWER for $\delta$ equals to twice the
average cluster diameter.
Additionally, in this realistic simulation, ecd-MTL exhibits the
best support recovery properties.
In \Cref{subsec:real_meg}, working on real MEG data, we show that,
contrary to sLORETA, ecd-MTLasso produces calibrated statistics
with universal threshold and retrieves expected patterns without
making dubious discoveries.
Overall, ecd-MTL offers statistical guarantees and is our privileged
method.

\paragraph{\textbf{Guidelines for statistical inference with ecd-MTLasso
on temporal M/EEG data.}}
First, we try to give guidelines concerning the number of clusters $C$.
\citet{hoyos2015improving} exhibit that clustering improves
problem conditioning, this means that the Restricted Eigenvalue (RE)
property (see assumptions A1 and A3) is more likely to be verified.
Complementary, we argue that, keeping $C$ over a hundred
(limiting compression loss), the fewer clusters,
the more A3 is likely to be verified for \Cref{prop:cdMTL} and
\Cref{prop:ecdMTL} to hold but also the better the sensitivity of ecd-MTL.
However, small $C$ also requires a higher spatial tolerance.
We then hit a fundamental trade-off for statistical inference
between sensitivity and spatial specificity.
Then, $C$ can be chosen depending on the problem setting:
if it is difficult (noisy), it seems natural to lower
spatial tolerance expectations (diminish $C$); in that sense
ecd-MTL is an adaptive method (cf. \Cref{fig:delta-FWER table}).
For the present use case, taking $C = 1000$ seems an adequate
trade-off to ensure $\delta$-FWER control with reasonable spatial
tolerance.

Now, we give recommendation for time sampling and window size.
Choosing too short windows complicate AR model estimation
due to the lack of data, while choosing too large windows
may lead to non stationary support.
We recommend taking windows of $20$ to $50$ms with a
time sampling at $5$ to $10$ms as keeping $T < 10$ reduces
computation time and should not decrease sensitivity significantly.

Finally, when working with M/EEG data, we recommend to use only
$10\%$ of the full data to compute several clustering solutions
with spatial constraint and Ward criteria to ensure enough diversity.

\paragraph{\textbf{Limitations.}}
The main limitation is the fact that mixing different types
of sensors violates modeling assumptions both on temporal
correlations and on spatial correlations, that is why we had to
treat MEG and EEG sensors separately.
A possibility to handle heterogeneous sensors is to follow
\citet{massias2018generalized}, but for the temporal part
further developments are required and left for future work.

Also left for future work, is the possibility of studying windows
larger than $50$ms. A simple solution is to slide a window of
$20$ to $50$ms over the considered period of time.

Finally, a more common limitation is the fact that assumptions
are hard to test in practice.

\section{Conclusion}
\label{sec:conclusion}

The MEG source imaging problem poses a hard statistical inference challenge: namely that of high-dimensional statistical analysis, furthermore with high correlations in the design.
We have proposed an estimator that calibrates correctly the effects size and variance, up to a number of hypotheses, that are not easily met: some level of sparsity, mild correlation across sensors, homogeneity and heteroscedasticity of the noise.
Up to these hypotheses, and up to a spatial tolerance on the exact location of the sources, we provide the first method with statistical guarantees for source imaging.
This is made possible by bringing several improvements to the original
desparsified Lasso solution: a multi-task formulation that increases
power by basing inference on multiple time steps, a clustering step
that renders the design less ill-posed and an ensembling step that
mitigates the (hard) choice of clusters.
Finally, our privileged method, \ecdMTLasso, runs in less than 10\,mn on a
real dataset on non-specialized hardware, making it usable by practitioners.

\newpage

\section{Statement of broader impact}
\label{sec:impact}


Magnetoencephalography (MEG) and electroencephalography (EEG) offer a
unique opportunity to image brain activity non-invasively with a
temporal resolution in the order of milliseconds.
This is relevant for cognitive neuroscience
to describe the sequence of active areas during certain cognitive
tasks, but also for clinical neuroscience, where electrophysiology
is used for diagnosis (\eg sleep medicine, epilepsy presurgical mapping).
Yet, doing brain imaging with M/EEG requires to solve a
challenging high-dimensional inverse problem for which statistical
guarantees are crucially important.
In this work, we address this statistical challenge when using
sparsity promoting regularization and when considering the
specificity of M/EEG signals: data are spatio-temporal and the noise
is temporally autocorrelated.
The proposed algorithm is built on very recent work in optimization to
speed up Lasso-type solvers, as well as work in mathematical
statistics on desparsified Lasso estimators.
We believe that this work, whose contribution is both on the modeling side
and on the inference aspects, brings sparse estimators close to
a wide adoptions in the neuroscience community.

We also would like to emphasize that the inference framework can be
adapted to many other high-dimensional problems where data
structure can be leveraged: biomedical data and physical observations
(cardiac or brain monitoring, genomics, seismology, etc.), especially those that involve severely ill-posed inverse problems.

\paragraph{Acknowledgements}
This research is supported under funding of French ANR project FastBig
(ANR-17- CE23-0011), the KARAIB AI chair (ANR-20-CHIA-0025-01), the
European Research Council Starting Grant SLAB ERC-StG-676943 and Labex
DigiCosme (ANR-11-LABEX-0045- DIGICOSME).

\bibliographystyle{plainnat}
\bibliography{biblio}

\clearpage

\appendix
\begin{center}
{\centering \LARGE APPENDIX}
\vspace{1cm}
\sloppy

\end{center}

\section{Formal definition of $\delta$-FWER control}
\label{sec:complement_metrics}

Now we give a more formal definition of the $\delta$-FWER.
\begin{df}[$\delta$-family wise error rate]\label{df:delta_fwer}
Given a family of (corrected) $p$-values $\hat{p} = (\hat{p}_j)_{j \in [p]}$
and a threshold $x \in (0,1)$, the $\delta$-FWER, also denoted
${\rm FWER}^{\delta}_{x}(\hat{p})$, is the probability
to make at least one false discovery at a distance at least $\delta$
from the true support:
\begin{align}
{\rm FWER}^{\delta}_{x}(\hat{p})
= \bbP(\min_{j \in N^{\delta}}\hat{p}_j \leq x )
\enspace ,
\end{align}
with $N^{\delta} \! = \left\{j \!\in \![p] \!:\! \forall k \in \mathrm{Supp}(\*B),
d(j,k) \geq \delta \right\}$ and $d(j,k)$ is the distance between source
$j$  and $k$.
\end{df}

\begin{df}[$\delta$-FWER control]\label{df:control_delta_fwer}
We say that the family of (corrected) $p$-values
$\hat{p} = (\hat{p}_j)_{j \in [p]}$
controls the $\delta$-FWER if, for all $x \in (0,1)$:
\begin{align}\label{eq:control_delta_fwer}
{\rm FWER}^{\delta}_{x}(\hat{p}) \leq x \enspace .
\end{align}
\end{df}

\section{Extended Restricted Eigenvalue assumption}
\label{sec:RE_assumption}
%
Here, we rewrite \citep[Assumption 3.1]{Lounici_Pontil_vandeGeer_Tsybakov11},
adjusting it for the multi-task Lasso case (particular case
of the more general group Lasso).
Notice that for a given value of $T$, the assumption is equivalent to
\citep[Assumption 4.1]{Lounici_Pontil_vandeGeer_Tsybakov11}.
Let $1 \leq s \leq p$ be an integer that gives an upper bound
on the sparsity $|\mathrm{Supp}(\*B)|$.
The extended Restricted Eigenvalue assumption RE($\*X,s$) is verified
on $\*X$ for sparsity parameter $s$ and constant $\kappa = \kappa(s) > 0$,
if:
\begin{align}\label{eq:RE_assumption}
\min \left\{
\frac{\norm{\*X \bm\Theta}}{\sqrt{nT}\norm{\bm\Theta_{J}}} :
|J|\leq s, \bm\Theta \in \bbR^{p \times T} \setminus \{ \*0 \},
\norm{\bm\Theta_{J^{C}}}_{2,1} \leq 3 \norm{\bm\Theta}_{2,1}
\right\} \geq \kappa \enspace ,
\end{align}
where $J \subset [p]$ and $J^{C}$ denotes its complementary \ie
$J^{C} = [p] \setminus J$, and $\bm\Theta_{J}$ refers to the matrix
$\bm\Theta$ without the rows $J^{C}$.
%

\section{Adaptive quantile aggregation of $p$-values and ecd-MTLasso algorithm}
\label{sec:complement}

In this section, we provide some more details on the way we perform
aggregation of $p$-values across the $p$-values maps created
through the clustering randomization, then we give the full
ecd-MTLasso algorithm.

For the $j$-th features (or source) we have a vector $(p_j^{(b)})_{b \in [B]}$ of $p$-values, with one $p$-value computed for each of the $B$ clusterings.
Then, the final $p$-value of the $j$-th feature is given by the
adaptive quantile aggregation, as proposed by
\citet{Meinshausen_Meier_Buhlmann09}:
\begin{align*}\label{eq:pval_aggreg}
	p_j =
	\min \left\{ (1-\log(\gamma_{\min}))
	\inf_{\gamma \in (\gamma_{\min}, 1)}
	\Big(
		\gamma \mbox{-quantile}
		\Big\{ \frac{p^{(b)}_j}{\gamma}; b \in [B] \Big\}
	\Big)
	~, ~~ 1
	\right\}
	\enspace ,
\end{align*}
where we have taken $\gamma_{\min} = 0.25$ in our experiments. Taking
a value of $\gamma_{\min}$ not too small (\eg $\gamma_{\min} \geq 0.25$)
allows to recover sources that have received small $p$-values
several times (\eg at least for $B / 4$ different choices of clustering).

We give the full algorithm of ecd-MTLasso in \Cref{alg:ecdMTLasso}.

{\fontsize{4}{4}\selectfont
\begin{algorithm}[ht]
\SetKwInOut{Input}{input}
\SetKwInOut{Output}{output}
\SetKwInOut{Parameter}{param}
\caption{ecd-MTLasso}
\Input{$\*X \in \bbR^{n \times p}, \*Y$}

\BlankLine

\Parameter{$C=1000, B=100$\\}

\BlankLine
\For{$b = 1, \dots, B$}
{\vspace {2mm}
  $ ~~ \*X^{(b)} = \texttt{sample}(\*X)$ \\
  $ \*A^{(b)} = \texttt{Ward}(C, \*X^{(b)}) $ \\
  $ \*Z^{(b)} = \*X \*A^{(b)}$ \\
  $ q^{(b)} = \frac{\texttt{d-MTLasso}(\*Z^{(b)}, \*Y)}{C}$
  \tcp*{corrected cluster-wise $p$-values at bootstrap $b$}
  \vspace {2mm}
  \For{$j = 1, \dots, p $}
  {$ p_j^{(b)} = q_r^{(b)}$ if $j \in G_r $
  \tcp*{corrected feature-wise $p$-values at bootstrap $b$}}
}
\BlankLine
\For{$j = 1, \dots, p $}
{$ p_j = \texttt{aggregation}(p_j^{(b)}, b \in [B])$
\tcp*{aggregated corrected feature-wise $p$-values}}
\BlankLine
\Return $p_j$ for $j \in [p]$
\label{alg:ecdMTLasso}
\end{algorithm}
}

\section{Proofs}
\label{sec:proof}

\subsection{Probability lemma}
\label{sub:probability_lemma}

\begin{lem}
	Let $\bm\varepsilon \in \bbR^{T}$ be a centered Gaussian random vector with (symmetric positive definite) covariance $\*M \in \bbR^{T\times T}$.
	Then, the random variable $\bm\varepsilon^\top \*M^{-1} \bm\varepsilon$ follows a $\chi^2_T$ distribution.
\end{lem}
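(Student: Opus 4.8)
The plan is to reduce the quadratic form to a sum of squares of i.i.d.\ standard normals by whitening $\bm\varepsilon$ with a square root of $\*M$. Since $\*M$ is symmetric positive definite, it admits (via its spectral decomposition) a symmetric positive definite square root $\*M^{1/2}$, which is invertible, with inverse $\*M^{-1/2}$ that is again symmetric and satisfies $\*M^{-1/2}\*M^{-1/2} = \*M^{-1}$. I would then define the transformed vector $\bm\eta \triangleq \*M^{-1/2}\bm\varepsilon$.

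Next I would identify the law of $\bm\eta$. As a fixed linear transformation of the Gaussian vector $\bm\varepsilon$, the vector $\bm\eta$ is Gaussian; its mean is $\*M^{-1/2}\,\mathbb{E}[\bm\varepsilon] = \*0$ and its covariance is
\begin{align*}
\mathbb{E}[\bm\eta\bm\eta^\top] = \*M^{-1/2}\,\mathbb{E}[\bm\varepsilon\bm\varepsilon^\top]\,\*M^{-1/2} = \*M^{-1/2}\*M\*M^{-1/2} = \*I_T \enspace .
\end{align*}
Hence $\bm\eta \sim \mathcal{N}(\*0,\*I_T)$, so its coordinates $\bm\eta_1,\dots,\bm\eta_T$ are i.i.d.\ $\mathcal{N}(0,1)$.

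Finally, using symmetry of $\*M^{-1/2}$,
\begin{align*}
\bm\varepsilon^\top \*M^{-1}\bm\varepsilon = \bm\varepsilon^\top \*M^{-1/2}\*M^{-1/2}\bm\varepsilon = (\*M^{-1/2}\bm\varepsilon)^\top(\*M^{-1/2}\bm\varepsilon) = \norm{\bm\eta}^2 = \sum_{t=1}^{T} \bm\eta_t^2 \enspace ,
\end{align*}
which is a sum of $T$ squared i.i.d.\ standard normals, i.e.\ by definition a $\chi^2_T$ random variable. This is a routine computation; the only points needing a word of care are the existence and symmetry/invertibility of $\*M^{1/2}$ (guaranteed by $\*M$ being SPD) and the fact that a linear image of a Gaussian vector is Gaussian, so there is no real obstacle here.
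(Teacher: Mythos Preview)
Your proof is correct and follows essentially the same route as the paper: both take the symmetric positive definite square root of $\*M$, whiten $\bm\varepsilon$ to obtain a standard Gaussian vector, and identify the quadratic form with the squared Euclidean norm of that whitened vector. The only cosmetic difference is notation ($\*M^{1/2}$ versus the paper's $\*N$).
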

\begin{proof}
	Note first that since $\*M$ is symmetric positive definite, its square-root $\*N \in \bbR^{T\times T}$ exists and is a symmetric positive definite matrix satisfying $\*N^2 = \*M$.
	Hence, this leads to the following displays
	\begin{align*}
		\bm\varepsilon^\top \*M^{-1} \bm\varepsilon = (\*N^{-1} \bm\varepsilon)^\top (\*N^{-1} \bm\varepsilon).
	\end{align*}
	We have that $\*N^{-1} \bm\varepsilon$ is a centered Gaussian random vector, and its covariance matrix reads:
	\begin{align*}
		\bbE \left[( \*N^{-1} \bm\varepsilon) (\*N^{-1} \bm\varepsilon)^\top \right]
		& = \bbE \left[ \*N^{-1} \bm\varepsilon  \bm\varepsilon^\top \*N^{-1} \right]\\
		& = \bbE \left[ \*N^{-1} \bm\varepsilon  \bm\varepsilon^\top \*N^{-1} \right]\\
		& =   \*N^{-1} \bbE\left[\bm\varepsilon  \bm\varepsilon^\top \right] \*N^{-1}\\
		& =   \*N^{-1} \*M \*N^{-1}\\
		& =   \*N^{-1} \*N^2 \*N^{-1}\\
		& =  \Id_T \enspace.
	\end{align*}
	To conclude $\*N^{-1} \bm\varepsilon \in\bbR^T$ is a centered Gaussian vector with covariance $\Id_T$, hence its squared Euclidean norm $ \norm{\*N^{-1} \bm\varepsilon}^2 = (\*N^{-1} \bm\varepsilon)^\top (\*N^{-1} \bm\varepsilon)$ follows a $\chi^2_T$ distribution.
\end{proof}

\subsection{Proof of \Cref{prop:desparsified_mtlasso}}
\label{sub:proof_of_prop:desparsified_mtlasso}

Now, we give a proof of \Cref{prop:desparsified_mtlasso}:

\begin{proof}
First, let us fix an index $j \in [p]$. Then, using \Cref{eq:debiasied_MTL} we have:
\begin{align}
  \begin{split}
	\sqrt{n}(\hat{\*B}_{j,.}^{{(\rm \dMTLasso)}} - \*B_{j,.})
	& =
  \sqrt{n} \frac{\*z_{j}^\top \*E}{\*z_{j}^\top \*X_{.,j}}
	- \sum_{k \neq j} \frac{\sqrt{n} \, \*z_{j}^\top \*X_{.,k}
   (\hat{\*B}^{\rm{MTL}}_{k,.} - \*B_{k,.})}
   {\*z_{j}^\top \*X_{.,j}}  \\
    & = \bm\Lambda_{j,.} + \bm\Delta_{j,.}
    \enspace,
  \end{split}
\end{align}
where $\bm\Lambda_{j,.} = \sqrt{n} \frac{\*z_{j}^\top \*E}{\*z_{j}^\top \*X_{.,j}}$
and $\bm\Delta_{j,.} = \sqrt{n} \sum_{k \neq j} \*P_{j,k}
(\*B_{k,.} - \hat{\*B}^{\rm{MTL}}_{k,.})$ with
$$\*P_{j,k} = \frac{\*z_{j}^\top \*X_{.,k}}
{\*z_{j}^\top \*X_{.,j}} \enspace .$$

Now, we show that $\bm\Lambda_{j, .} \sim \mathcal{N}_{p}(\*0, \, \hat{\bm\Omega}_{j, j} \*M)$, or equivalently we show that
$\*E^\top \*z_{j} \sim  \cN(0,n \norm{\*z_{j}}^2 \*M)$.
It is clear that $\*E^\top \*z_{j}$ is a centered Gaussian vector.
Then, its covariance denoted by $\*V^{(j)}$, can be computed as follows:
\begin{align*}
	\*V^{(j)} = \bbE(\*E^\top \*z_{j} \*z_{j}^\top \*E ) \in \bbR^{T\times T} \enspace,
\end{align*}
whose general term is given for $t,t' \in [T]$ by
\begin{align*}
	\*V_{t,t'}^{(j)}
	& = \bbE(\*E_{.,t}^\top \*z_{j} \*z_{j}^\top \*E_{.,t'})\\
	& = \bbE(\*z_{j}^\top \*E_{.,t'} \*E_{.,t}^\top \*z_{j}) \quad\quad\quad\quad(\text{scalar values commute})\\
	& = \*z_{j}^\top \bbE(\*E_{.,t'} \*E_{.,t}^\top) \*z_{j}\\
	& = \*z_{j}^\top \bbE(\sum_{i=1}^n\*E_{i,t'} \*E_{i,t}^\top) \*z_{j}\\
	& = \*z_{j}^\top \sum_{i=1}^n \bbE(\*E_{i,t'} \*E_{i,t}^\top) \*z_{j} \enspace.
\end{align*}
Then, the noise structure in \Cref{eq:noise_law_2} yields
$\*V_{t,t'}^{(j)} = \*z_{j}^\top n \*M_{t,t'} \*z_{j} = n \norm{\*z_j}^2\*M_{t,t'}$.

Now, we show that with high probability $\norm{\bm\Delta}_{2, 1} =
\mathrm{O} \left(\frac{s \lambda \sqrt{\log(p)}}{\kappa^2}\right)$.
First, notice that:
\begin{align*}
	\norm{\bm\Delta}_{2, 1}
	& \leq \sqrt{n} \max_{k \neq j} |\*P_{j,k}|
  \norm{\hat{\*B}^{\rm{MTL}} - \*B}_{2, 1}
 \enspace.
\end{align*}
For a convenient choice of the regularization parameters $\bm\alpha$,
using \citet[Lemma 2.1]{buhlmann2011statistics} and following the
same approach as \citet[Appendix A.1]{Dezeure_Buhlmann_Meier_Meinshausen15},
we obtain, with high probability:
\begin{align*}
	\sqrt{n} \max_{k \neq j} |\*P_{j,k}|
	=
	\mathrm{O}\left(\sqrt{\log(p)}\right) \enspace.
\end{align*}
Bounds on $\normin{\hat{\*B}^{\rm{MTL}} - \*B}_{2, 1}$ are also
available in the literature \citep{Lounici_Pontil_vandeGeer_Tsybakov11}
for $\rho = 0$ and can be extended to $\rho > 0$ similarly.
Notably, provided $\rho = 0$, assuming A1 for a sparsity parameter
$|\mathrm{Supp}(\*B^*)| \leq s$, a given constant $\kappa = \kappa(s) > 0$,
and a choice of $\lambda$ large enough in \Cref{eq:beta_mtl},
\citep[Theorem 3.1]{Lounici_Pontil_vandeGeer_Tsybakov11} gives directly
the following bound, with high probability:
\begin{align*}
\norm{\hat{\*B}^{\rm{MTL}} - \*B}_{2, 1} =
\mathrm{O}\left(\frac{s \lambda}{\kappa^2}\right)
\enspace.
\end{align*}

\end{proof}

\begin{rk}
  Following \citet{vandeGeer_Buhlmann_Ritov_Dezeure14}, to neglect $\bm\Delta$
  we need to have $\norm{\bm\Delta}_{\infty} = \mathrm{o(1)}$.
  This condition is verified if
  $s =  \mathrm{o}\Big(\frac{\kappa^2}{\lambda\sqrt{\log(p)}}\Big)$.
\end{rk}

\subsection{Proof of \Cref{prop:cdMTL}}
\label{sub:proof_of_prop_cdMTL}

Before starting the proof, let us give more precision on assumption A2,
the complete assumption is the following:

(A2) there exists $\bm\Gamma \in \bbR^{C \times T}$
such that $\bm\Gamma_{r, .} = \sum_{j \in G_r}{w_{j}\*B_{j, .}}$
with $w_{j} \geq 0$ for all $j \in [p]$,
so that the associated compression loss $\*X \*B - \*Z\bm\Gamma$
is bounded as follows:
\begin{align}
    \norm{\*X \*B - \*Z \bm\Gamma}^{2}_{2,2}
    \leq \xi \frac{T \phi^2_{\min}(\*M)}{n}
    =  \xi \frac{ T \phi^2_{\min}(\*R) \sigma^2}{n}
    \enspace ,
\end{align}
where $\xi > 0$ is an arbitrary small constant, $\phi^2_{\min}(\*M)
> 0$ is the smallest eigenvalue of $\*M$ and $\phi^2_{\min}(\*R)
> 0$ is the smallest eigenvalue of $\*R$, the temporal correlation
matrix of the noise defined by $\*R = \*M / \sigma^2$.
The hypothesis plainly means that the noise induced by design matrix
compression is small enough with respect to the model noise.

Now we give a proof of \Cref{prop:cdMTL}:

\begin{proof}

First, we derive the \dMTLasso for the compressed problem, for $r \in [C]$:

\begin{align}\label{eq:debiasied_MTL_compressed}
	\hat{\bm\Gamma}_{r,.}^{(\rm\dMTLasso)}
	=
	\frac{\*a_{r}^\top \*Y}{\*a_{r}^\top\*Z_{.,r}}
	- \sum_{l \neq r} \frac{\*a_{r}^\top \*Z_{.,l} \hat{\bm\Gamma}^{\rm{MTL}}_{r,.}}{\*a_{r}^\top \*Z_{.,r}}\enspace,
\end{align}

where $a_{r}$'s are the residuals obtained by nodewise Lasso on $\*Z$
playing the same role as the $z_{j}$'s in \Cref{eq:debiasied_MTL}.
Then, as done in \Cref{sub:proof_of_prop:desparsified_mtlasso}, we derive:
\begin{align}
  \begin{split}
	\sqrt{n}(\hat{\bm\Gamma}_{r,.}^{{(\rm \dMTLasso)}} - \bm\Gamma_{r,.})
	& =
	\sqrt{n} \frac{\*a_{r}^\top \*E}{\*a_{r}^\top \*Z_{.,r}}
	- \sum_{l \neq r} \frac{\sqrt{n} \, \*a_{r}^\top \*Z_{.,l}
   (\hat{\bm\Gamma}_{l,.}^{\rm{MTL}} - \bm\Gamma_{l,.})}
   {\*a_{r}^\top \*Z_{.,r}}
   + \frac{\sqrt{n} \, \*a_{r}^\top (\*X \*B - \*Z \bm\Gamma)}
    {\*a_{r}^\top \*Z_{.,r}} \\
    & = \bm\Lambda^{\prime}_{r,.} + \bm\Delta^{\prime}_{r,.} + \bm\Pi_{r,.}
    \enspace,
  \end{split}
\end{align}

We treat $\bm\Lambda^{\prime}$ and $\bm\Delta^{\prime}$ as in
\Cref{sub:proof_of_prop:desparsified_mtlasso}, assuming that the
hypotheses that are used to bound (hence, neglect)
$\bm\Delta^{\prime}$ are verified (notably A3).

Next, for $r \in [C]$, we want to establish that
$\frac{n \norm{\bm\Pi_{r,.}}_{\*M^{-1}}^2}{T \hat{\bm\Omega}^{\prime}_{r,r}}$
is negligible, \ie that $\bm\Pi$ has a negligible effect on all decision
statistics, where the covariance $\hat{\bm \Omega}^{\prime}$ has the
following generic diagonal term:
$$\hat{\bm\Omega}^{\prime}_{r,r} =
\frac{n \norm{\*a_{r}}^2}{|\*a_{r}^\top \*Z_{.,r}|^2} \enspace .$$

Given that
\begin{align}
  \norm{\bm\Pi_{r,.}}_{\*M^{-1}}^2 & =
  \frac{n \norm{\*a_{r}^\top (\*X \*B - \*Z \bm\Gamma)}_{\*M^{-1}}^2}
  {|\*a_{r}^\top \*Z_{.,r}|^2}\\
  & \leq n \frac{\norm{\*a_{r}^\top}^2}{|\*a_{r}^\top \*Z_{.,r}|^2}
  \frac{\norm{\*X \*B - \*Z \bm\Gamma}^2_{2,2}}{\phi^2_{\min}(\*M)}
  \enspace ,
\end{align}
where $\norm{\cdot}_{2,2}$ denotes the spectral norm.
Then, we obtain that
\begin{align}
\frac{n
  \norm{\bm\Pi_{r,.}}_{\*M^{-1}}^2}{T \hat{\bm\Omega}^{\prime}_{r,r}} & \leq
\frac{n}{T} \frac{\norm{\*X \*B - \*Z \bm\Gamma}^2_{2,2}}{\phi^2_{\min}(\*M)}
\leq \xi \enspace .
\end{align}

Then, if A2 is verified for $\xi$ small enough, we can also neglect
$\bm\Pi$ in front of $\bm\Lambda^{\prime}$.

Then, by neglecting $\bm\Pi$ and $\bm\Delta^{\prime}$, we have:
\begin{align}
\sqrt{n}(\hat{\bm\Gamma}^{{(\rm \dMTLasso)}} - \bm\Gamma) \sim
\mathcal{N}_{C}(\*0, \, \hat{\bm\Omega}^{\prime}_{r, r} \*M)
\enspace .
\end{align}

Then we can construct $p$-values that test the $r$-th null hypothesis
$H^{(r)}_0$ : ``$\bm\Gamma_{j,.}=0$'',
applying the same technique as in \Cref{sub:desparsified_multi_task_lasso}.
By correcting these $p$-values ---\eg using the Bonferroni correction
\citep{dunn1961}, we multiply by $C$ the initial $p$-values---,
 we obtain cluster-wise corrected $p$-values that control the FWER.

Since, for all $r \in [C]$, $\bm\Gamma_{r,.}$ is a
linear combination of $\*B_{j,.}$ for $j \in G_{r}$, then
$\bm\Gamma_{r,.} \neq 0$ if at least there exist $j \in G_{r}$
such that $\*B_{j,.} \neq 0$.

Then, defining the feature-wise corrected $p$-values by
the corrected $p$-values of the corresponding cluster,
and assuming that clusters are at most of size $\delta$,
such corrected $p$-values control the $\delta$-FWER.

\end{proof}

\begin{rk}

  In assumption A2, having a positive linear combination
  is not necessary, a simple linear combination is sufficient.

  However, we assumed that $\bm\Gamma_{r,.}$ was a positive
  linear combination of $\*B_{j,.}$ for $j \in G_{r}$,
  to get the following desired properties:

  "If additionally for $r \in [C]$, for all $j \in G_{r}$
  and all $k \in G_{r}$, we have $\sign(\*B_{j,.}) = \sign(\*B_{k,.})$,
  then $\sign(\bm\Gamma_{r,.}) = \sign(\*B_{j,.})$
  (zero being booth positive and negative)."

  This means that if all the features' weights in a cluster have the
  same sign, there exists a compression verifying A2 such that the
  cluster weight preserves the sign.

\end{rk}

\subsection{Proof of \Cref{prop:ecdMTL}}
\label{sub:proof_of_prop_ecdMTL}

\begin{proof}
Assuming the hypotheses of \Cref{prop:ecdMTL} and applying \Cref{prop:cdMTL},
we can, for each of the $B$ compression of the problem in
\Cref{eq:noise_model}, construct a corrected $p$-value family that
control the $\delta$-FWER.
Applying the quantile aggregate method in \Cref{eq:pval_aggreg},
we derive a corrected $p$-value family taking into account for
each compression choice.
Applying \citet[Theorem 3.2]{Meinshausen_Meier_Buhlmann09}, this
aggregated corrected $p$-value family
also controls the $\delta$-FWER.
\end{proof}

\section{Computational aspects}
\label{subsec:computational}
%
Here we give some elements about the computational aspect of the
algorithms we propose.

For solving Lasso or multi-task Lasso problems, we rely for additional
speed-up on \texttt{celer}\footnote{\url{https://github.com/mathurinm/CELER}}
\citep{Massias_Gramfort_Salmon18,Massias_Vaiter_Gramfort_Salmon19},
a solver which is much more efficient than
the standard coordinate descent (speed up by more than 10x on our
experiments).

To compute d-MTLasso, we must solve $p$ Lasso of size $(n, (p -1))$,
and 1 multi-task Lasso with cross-validation on a dataset of size $(n, p, T)$.
For $n = 200$, $p = 7500$ and $T = 10$, the algorithms can be run
on a standard laptop in around $10$ hours (using only 1 CPU).
However, the algorithm is embarrassingly parallel and requires
around $15$ minutes if run on a machine with $50$ CPUs.
To compute cd-MTLasso, we must solve $C$ Lasso of size $(n, (C -1))$.
and 1 multi-task Lasso with cross-validation on a dataset of size $(n, C, T)$.
For $n = 200$, $C=1000$ and $T = 10$, it can be run on a standard local
device in less than $1$ minute (using only 1 CPU).
Finally, to compute ecd-MTLasso, we must solve $B$ cd-MTLasso.
For $B = 100$ ($25$ is already a good value to get most of the advantages
of ensembling), $n = 200$, $C=1000$ and $T = 10$, it can be run on a
standard laptop in around $1$ hour (using only 1 CPU)
and around $1$ minute on a machine with $50$ CPUs.

Although, when using coordinate-descent-like algorithms, the complexity depends
on solver parameters such as tolerance on stopping criteria, the complexity in $C$
(or $p$) appears empirically to be cubic, while it is linear in $n$ and $T$.
It is also linear in $B$.

\section{Detailed data description}
\label{sec:data}
For AEF and VEF, data contained one artifactual channel leading to $n=203$,
while for SEF data were preprocessed for removal of environmental noise
leading to an effective number of samples of $n=64$~\citep{taulu:06}.
For the AEF dataset, we report results for AEFs evoked by left auditory stimulation with pure tones of 500\,Hz.
The analysis window for source estimation was chosen from \mbox{50\,ms to 200\,ms} based on visual inspection of the evoked data to capture the dominant N100m component, leading to $T=6$.
For the SEF dataset, we analyzed SEFs evoked by bipolar electrical stimulation (0.2\,ms in duration) of the left median nerve.
To capture the main peaks of the evoked response and exclude the strong stimulus artifact, the analysis window was chosen from \mbox{18\,ms to 200\,ms} based on visual inspection of the sensor signal.

Preprocessing was done following the standard pipeline from the MNE
software~\citep{mne}.
Baseline correction using pre-stimulus data
(from -200\,ms to 0\,ms) was used. Epochs with peak-to-peak amplitudes exceeding
predefined rejection parameters (3\,pT for magnetometers and
400\,pT/m for gradiometers, and 150\,$\mu$V for EOG on AEF and VEF and 350\,$\mu$V for SEF) were assumed to be affected by artifacts and discarded.
%
This resulted in 55 (AEF), 67 (SEF) and 111 (SEF) artifact-free measurements
which were average to produce the target matrix $\*Y$.
%
The gain matrix was computed using a set of $p=7498$ cortical locations,
and a three-layer boundary element model.

\section{Related Work}
\label{sec:app_rel_work}

The topic of high-dimensional inference has been addressed in many
recent works. Yet, to the best of our knowledge, none of this literature has
been applied to the source localization problem we consider here.
\begin{itemize}
\item The idea of associating clustering with high-dimensional inference can also be found in recent works with application to genetic data: \citet{Buhlmann:2013} has used a fixed clustering step, which is made adaptive in \cite{Mandozzi:2016}. Our contribution deviates from these works in two regards: unlike \cite{Buhlmann:2013}, we do not consider that a fixed clustering, however good it is, indeed captures the essence of the problem: this is why we resort to an ensemble of different clustering solutions.
Unlike \cite{Mandozzi:2016}, we do not try to narrow down the inference in a hierarchical fashion, because we do not consider that source imaging can in effect be traced down to the vertex level: given the difficulty of the source imaging problem, we find it more satisfactory to outline a region of putative activity.
\item Another family of inference methods based on sample splits has been introduced by \citet{Meinshausen_Meier_Buhlmann09}: train data are used to select regions, test data to assess their statistical significance.
The choice of splits can be varied and aggregated upon to mitigate the impact of arbitrary splits selection.
However, data splitting has a high cost in terms of statistical power, making these approaches weakly sensitive \cite{Taylor:2015}.
\item An alternative method yielding family-wise error rate (FWER) control is the stability selection method, that builds on bootstrapped randomized sparse regression \cite{Meinshausen2010}.
  Yet, this approach has been found too weakly sensitive 
  and it has not been considered in further statistical inference works, see \eg \citet{Dezeure_Buhlmann_Meier_Meinshausen15}.
\item Post-selection inference \cite{Taylor:2015} is an approach that typically relies on a sparse estimator (such as Lasso) and then assesses the significance of the selected variables.
It accounts for the selection in the inference process, avoiding the undesirable bias of selecting and testing on the same data.
However, we have not not found
an implementation that scales in a numerically sound way to the problem size that we are considering here: thousand features, even after clustering.
\item Knockoff inference (with or without clustering) is probably the most recent alternative developed for high-dimensional inference \citep{Barber:2016}: it consists in appending noisy copy of the problem features and selecting only variables that are much more significantly associated than their noisy copy.
While this approach is computationally relevant for the problem at hand, it suffers from the arbitrary knockoff variable set used; it yields a control of the false discovery rate of the detection problem, that is not directly comparable with the family-wise error rate (FWER) considered here.
FWER control is possible with knockoff \cite{janson:2015}, yet very weakly sensitive.
\end{itemize}

\newpage

\section{Supplementary figures}
\label{sec:supp_fig}

\begin{figure}[!ht]
  \begin{minipage}{0.495\linewidth}
  \centering\includegraphics[width=\textwidth]
  {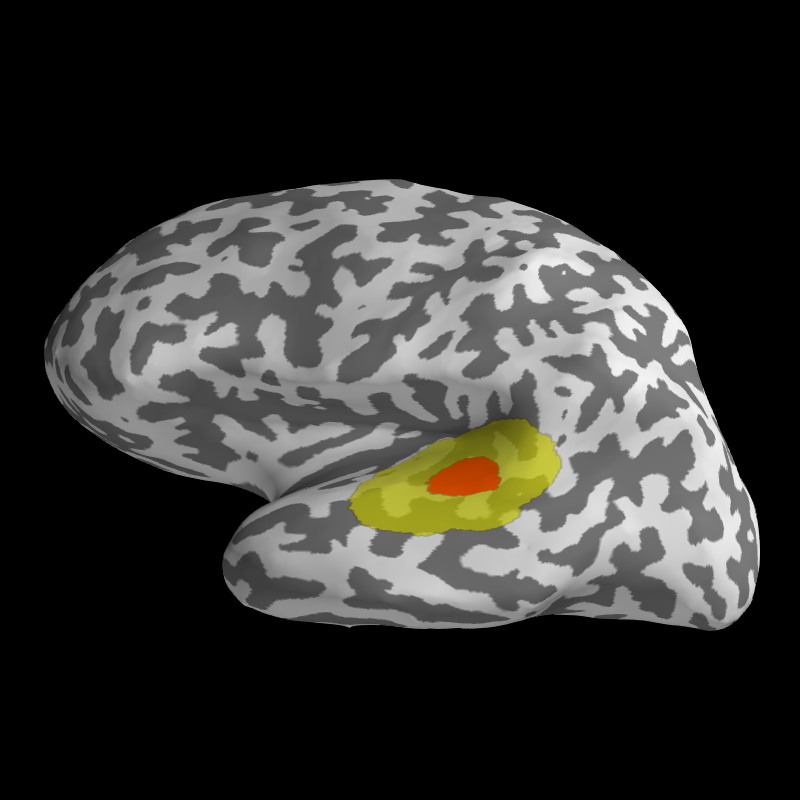}
  \end{minipage}
  \begin{minipage}{0.495\linewidth}
  \centering\includegraphics[width=\textwidth]
  {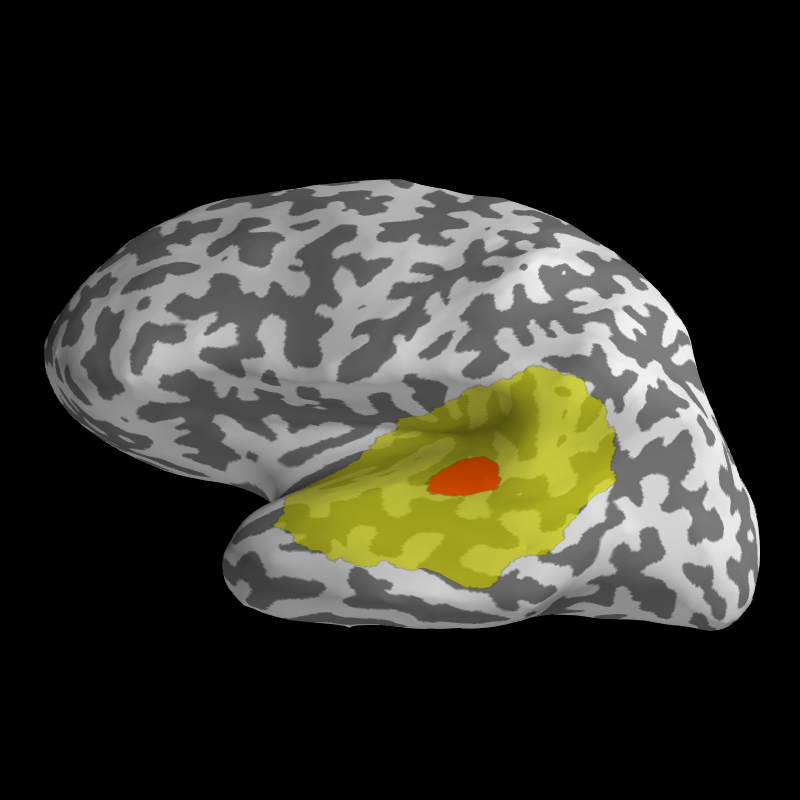}
  \end{minipage}
  \caption{\textbf{Illustrating spatial tolerance of size $\bm\delta$ = 20\,mm
  and $\bm\delta$ = 40\,mm.}
  The true source in red has a 10\,mm radius (distance measured
  on the cortical surface) and the spatial tolerance extend this
  region by 20\,mm on the left side and 40\,mm on the right side
  in yellow.
  The $\delta$-FWER is the probability of making false discoveries outside
  of the extended region. Then, a false discovery made in the yellow region is
  not counted neither as an error nor a true positive.}
  \label{fig:spatial_tolerance}
\end{figure}

\begin{figure}[!ht]
  \begin{minipage}{0.495\linewidth}
    \centering\includegraphics[width=\linewidth]
    {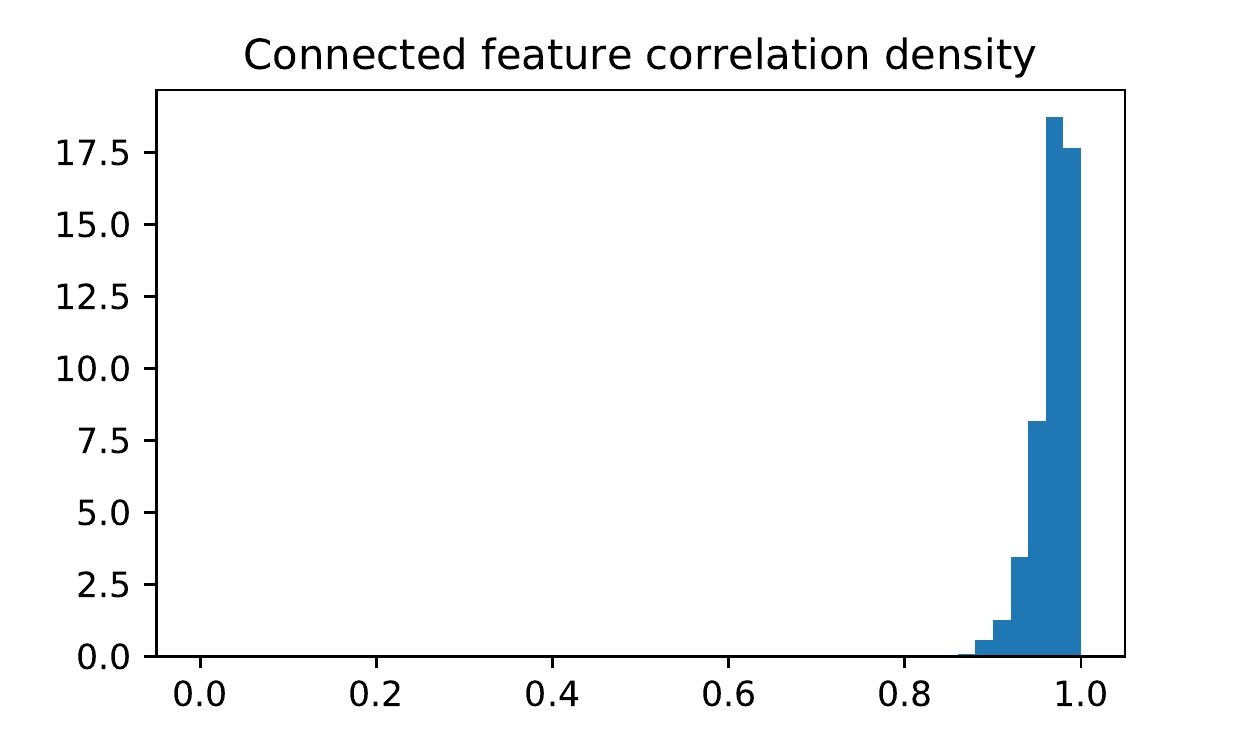}
  \end{minipage}
  \begin{minipage}{0.495\linewidth}
    \centering\includegraphics[width=\linewidth]
    {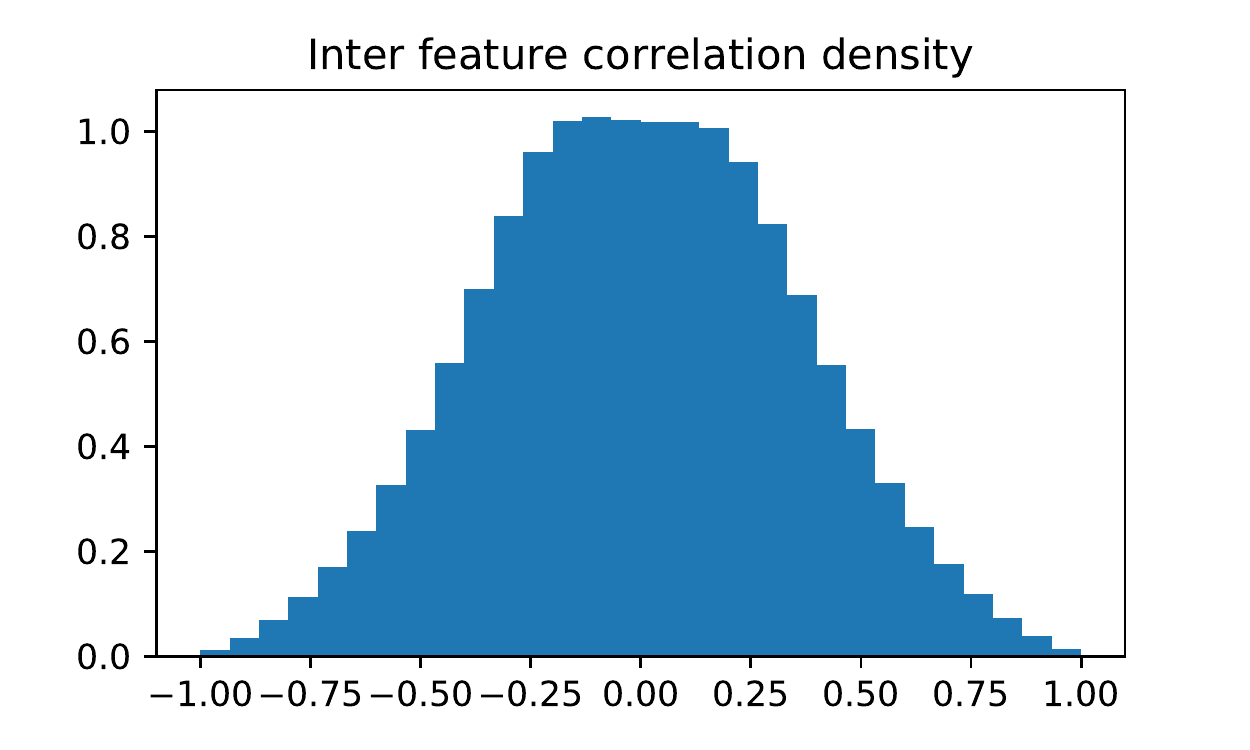}
  \end{minipage}
  \begin{minipage}{0.495\linewidth}
    \centering\includegraphics[width=\linewidth]
    {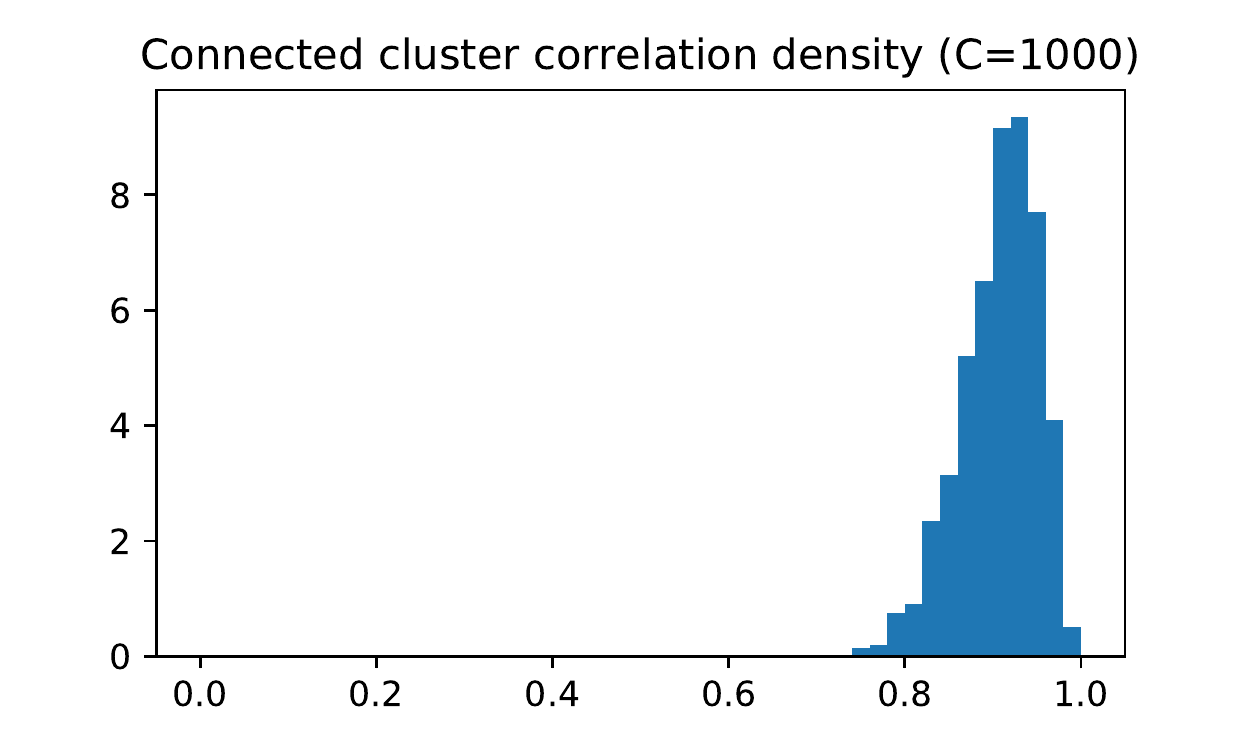}
  \end{minipage}
  \begin{minipage}{0.495\linewidth}
    \centering\includegraphics[width=\linewidth]
    {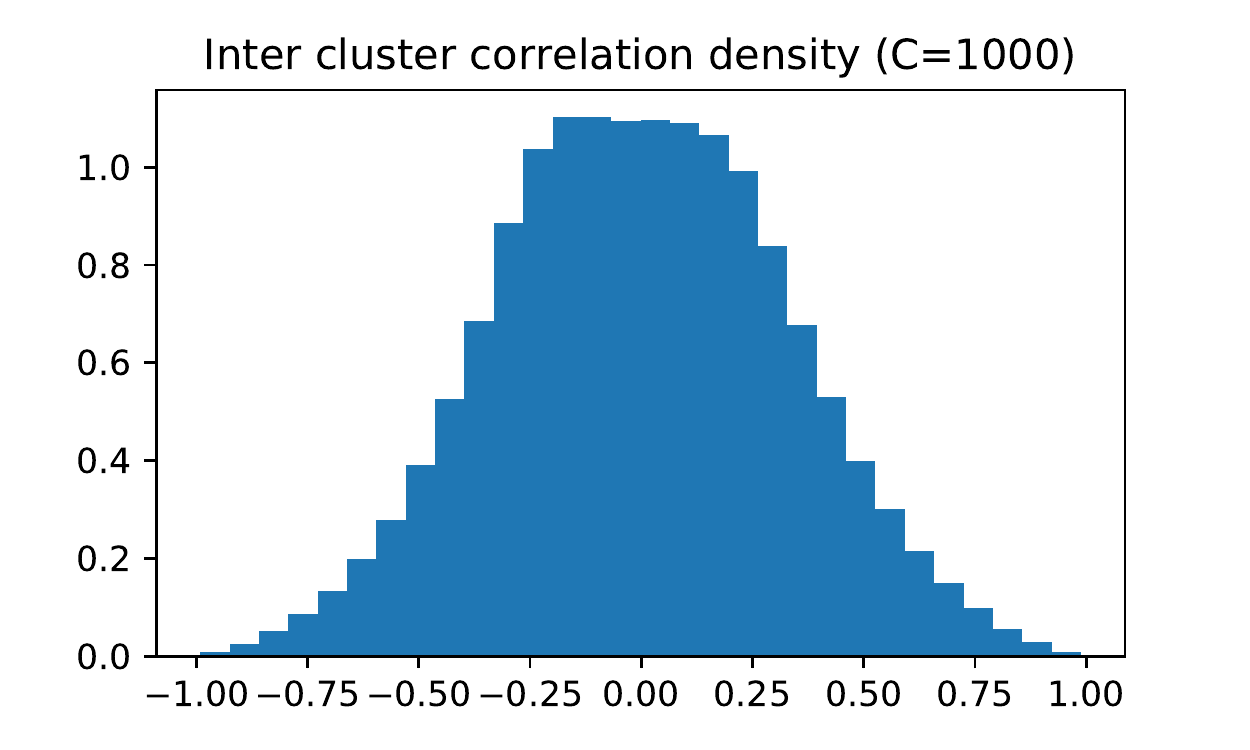}
  \end{minipage} \hfill
  \caption{\textbf{Illustrating correlation in MNE sample MEG data.}
  (left): Distribution of the maximum correlation between a feature
  (resp. cluster) and another connected feature (resp. cluster).
  (Top) the maximum connected feature correlation is close to 0.98 in average. (Bottom) the maximum connected cluster
  correlation is lower, close to 0.9 on average. Clustering improves
  conditioning significantly.
  (right): The density of the inter feature correlation (top) looks
  similar to the density of the inter cluster correlation (bottom).
  By focusing the extreme values of correlation, we see a little
  decrease of extreme values for the clustered data.
  }
  \label{fig:correlation_data}
\end{figure}

\begin{figure}[!ht]
  \centering
  \includegraphics[width=\textwidth]{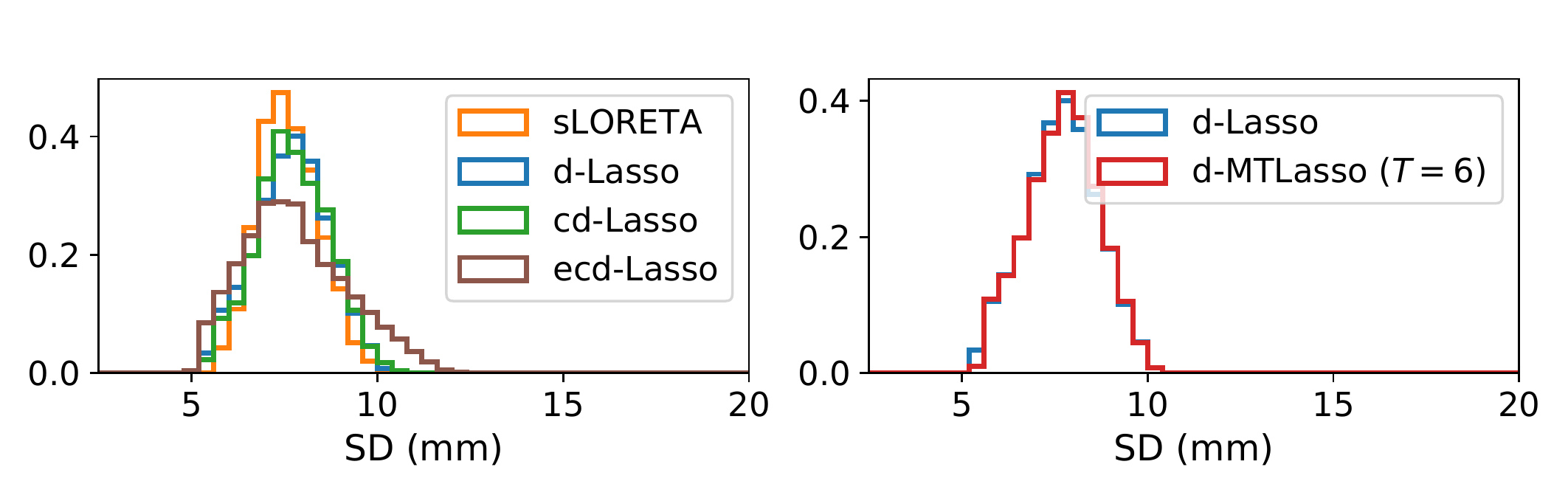}
  \caption{\textbf{Spatial Dispersion (SD) histograms.}
  (left): SD on a fixed time point~\citep{HAUK20111966}. All methods lead to
  comparable spatial dispersion.
  (right): SD for desparsified multi-task Lasso (d-MTLasso) with increasing
  time points. See~\Cref{fig:metrics_hist_sparse_simu} for PLE histograms
  on the same experiments.
  }
  \label{fig:metrics_sd_hist_sparse_simu}
\end{figure}

\begin{figure}[!ht]
  \centering
  \includegraphics[width=0.5\textwidth]{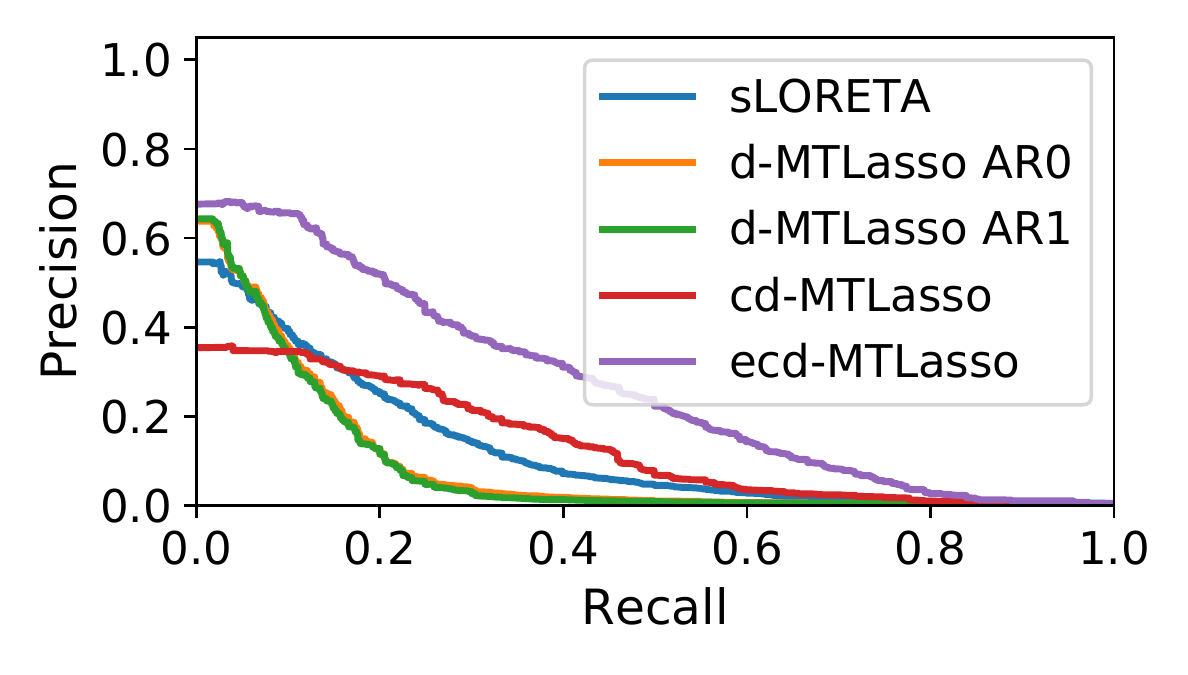}
  \caption{\textbf{Precision-Recall.}
  See~\Cref{fig:fwer_delta_precision_recall} for $\delta-$Precision-Recall curves
  computed on the same data.
  }
  \label{fig:precision_recall}
\end{figure}

\begin{figure}[!ht]
  \centering
  \begin{minipage}{0.495\linewidth}
      \centering\includegraphics[width=\linewidth]
      {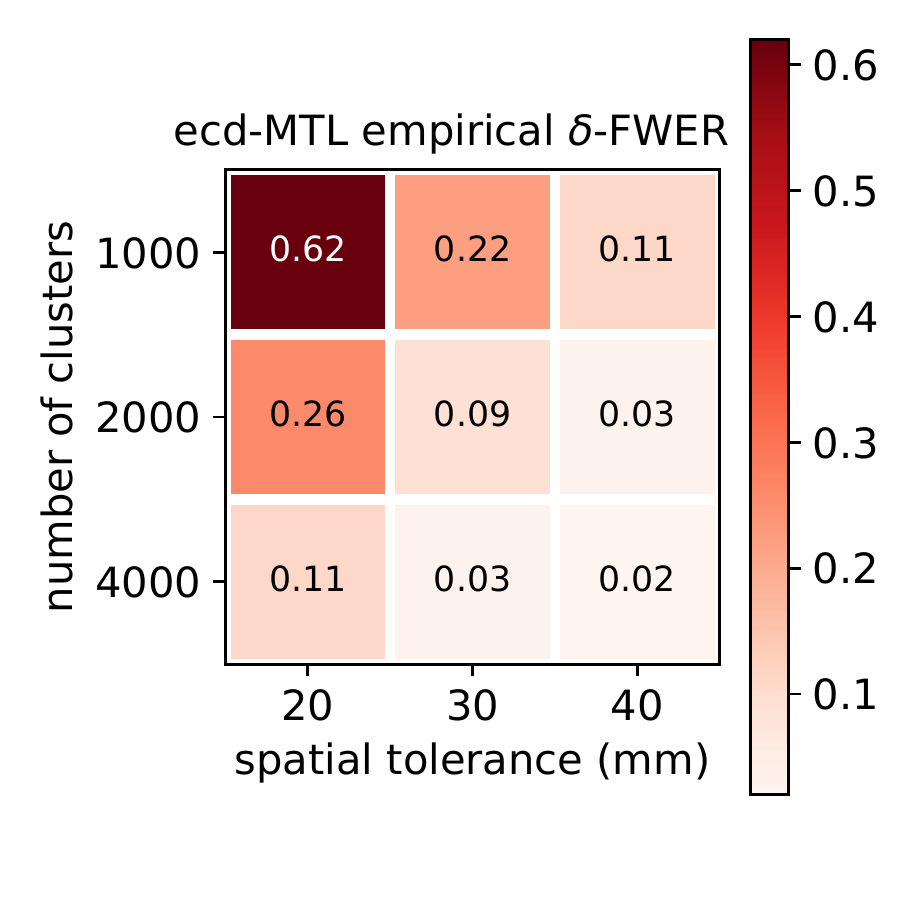}
  \end{minipage}
  \begin{minipage}{0.415\linewidth}
      \centering\includegraphics[width=\linewidth]
      {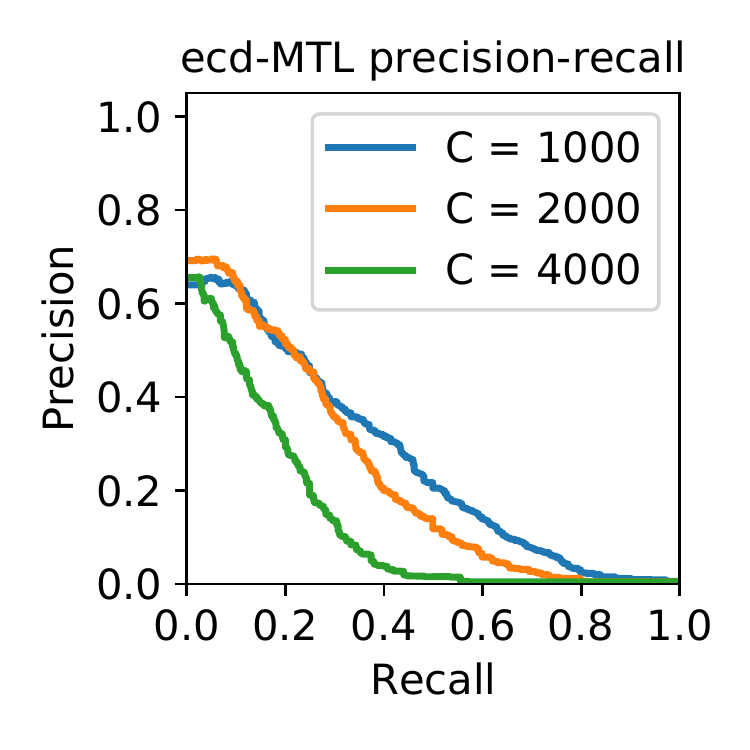}
  \end{minipage}
  \caption{\textbf{ecd-MTLasso empirical $\delta$-FWER and precision recall for
  different choice of cluster sizes.}
  (left): Running the same simulation as in \Cref{subsec:semi_real_simu},
  we observe that the spatial tolerance $\delta$ can be reduced to 20\,mm
  by increasing the number of clusters up to $4000$.
  With $C = 1000$ clusters (resp. $C = 2000$, $C = 4000$), the average cluster
  diameter is around 18\,mm (resp. 13\,mm and 9\,mm).
  It turns out that the  $\delta$-FWER is controlled for around twice the diameter
  (if the compressed design matrix verifies assumption A1).
  (right): We see that this decrease in spatial tolerance comes with a
  price regarding support recovery: the precision-recall curve declines
  with when  $C$ is increased.
  (both): Note that we need to set the hyper-parameter $c$ that is used
  to compute the regularization parameters $\bm\alpha$ (see note coming
  with \Cref{eq:z-def}). We found empirically that it should be inversely proportional to $C$: for $C = 1000$, $c = 0.5 \%$; for $C = 2000$,
  $c = 0.25 \%$; for $C = 4000$, $c = 0.15 \%$.}
  \label{fig:delta-FWER table}
\end{figure}

\begin{figure}[!ht]
  \centering
  \includegraphics[width=0.24\textwidth]{figures/real_audio_sLORETA_lh.png}
  \includegraphics[width=0.24\textwidth]{figures/real_audio_group_inference_AR0_lh.png}
  \includegraphics[width=0.24\textwidth]{figures/real_audio_group_inference_AR1_lh.png}
  \includegraphics[width=0.24\textwidth]{figures/real_audio_ensemble_clustered_group_inference_C=1000_lh.png}
  \includegraphics[width=0.24\textwidth]{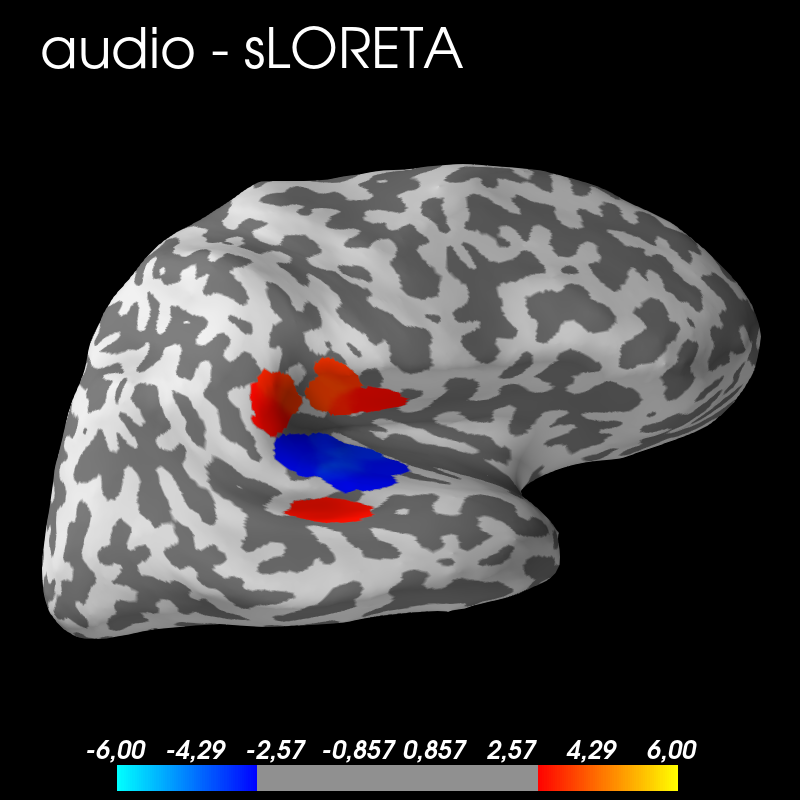}
  \includegraphics[width=0.24\textwidth]{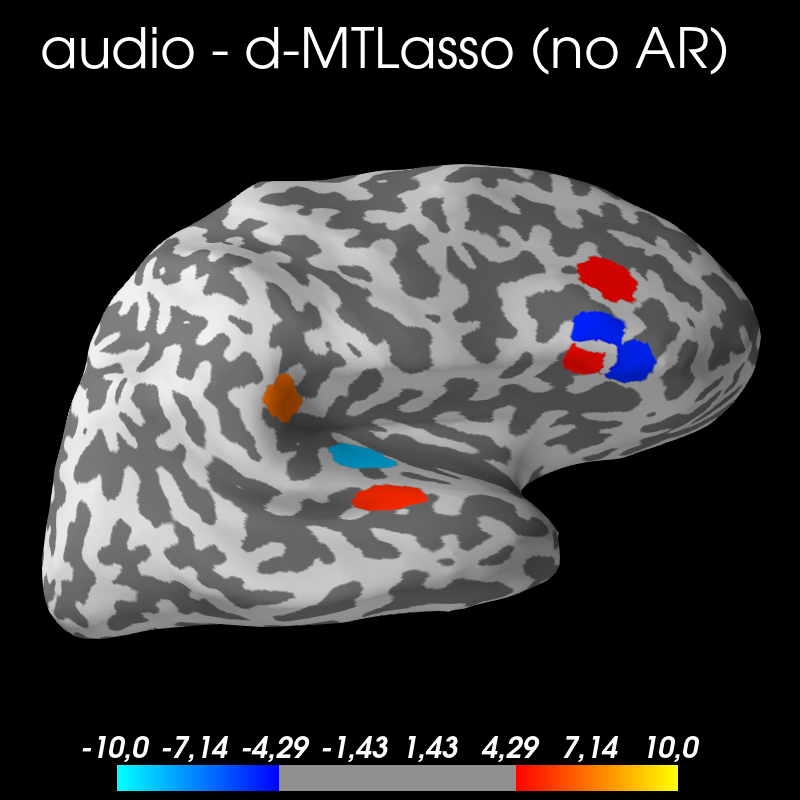}
  \includegraphics[width=0.24\textwidth]{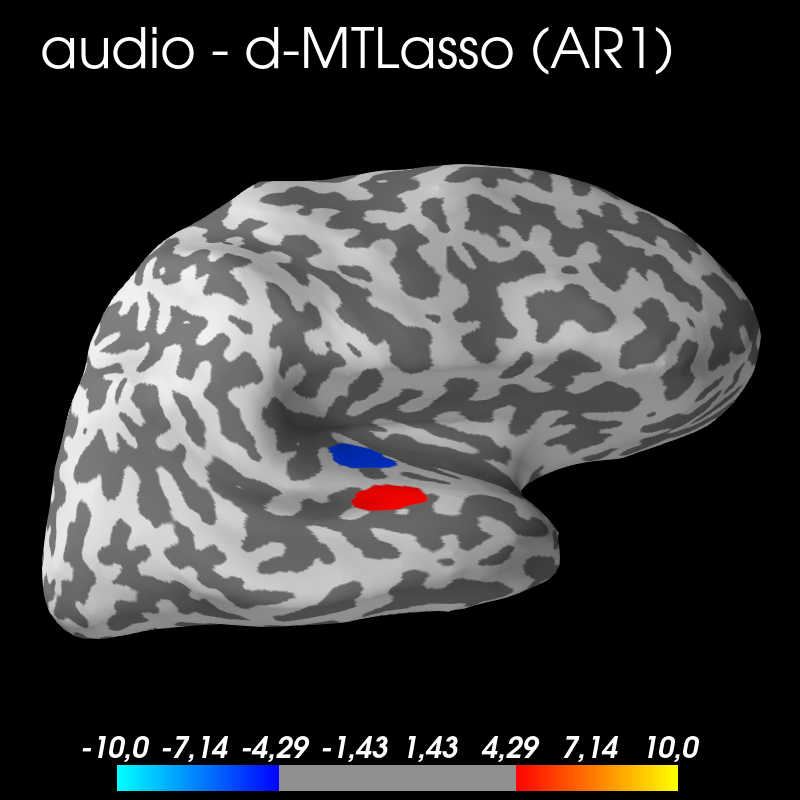}
  \includegraphics[width=0.24\textwidth]{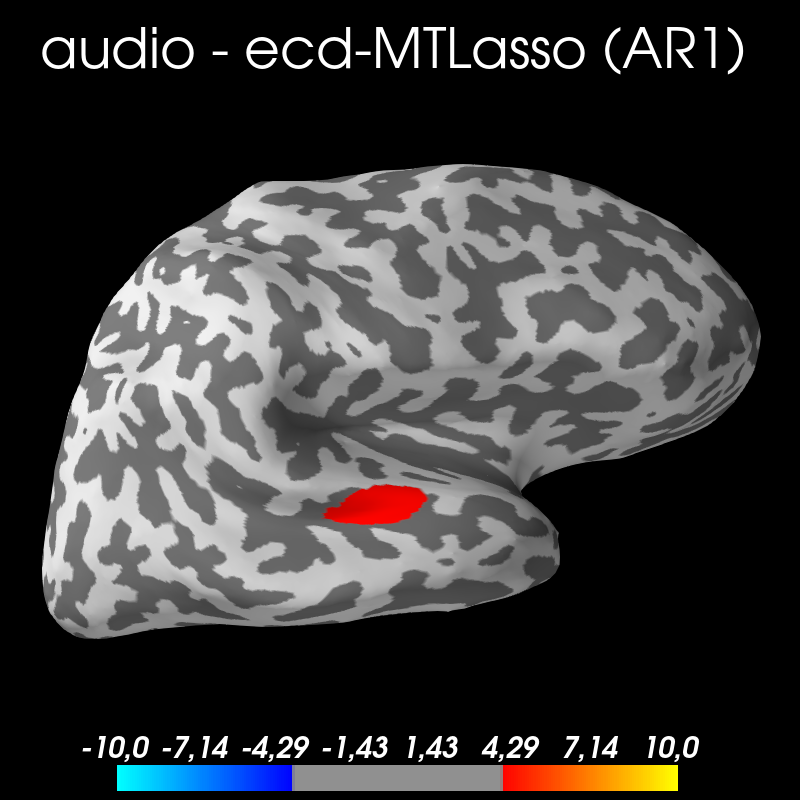}
  \caption{\textbf{Comparison on audio dataset on both hemispheres.}
  From left to right are compared sLORETA, d-MTLasso without AR modeling
  (noise is assumed non-autocorrelated), d-MTLasso with an AR1 noise model
  and the ecd-MTLasso using also an AR1.
  The results correspond to auditory (top) evoked fields.
  Colormaps are fixed across datasets and adjusted based on meaningful
  statistical thresholds in order to outline FWER control issues.
  }
  \label{fig:audio_comparison}
\end{figure}

\begin{figure}[!ht]
  \centering
  \includegraphics[width=0.325\textwidth]
  {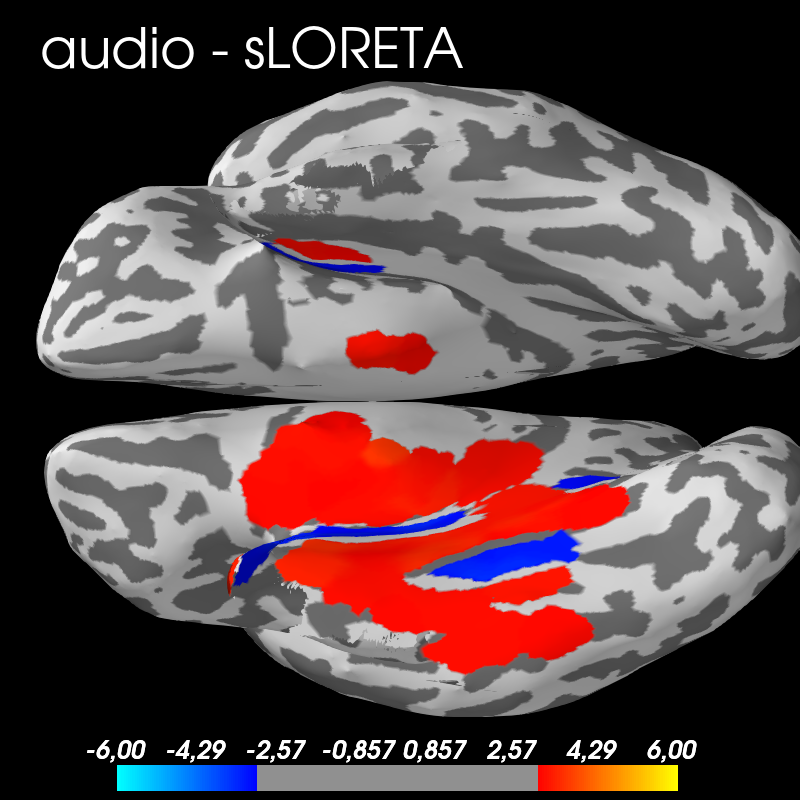}
  \includegraphics[width=0.325\textwidth]
  {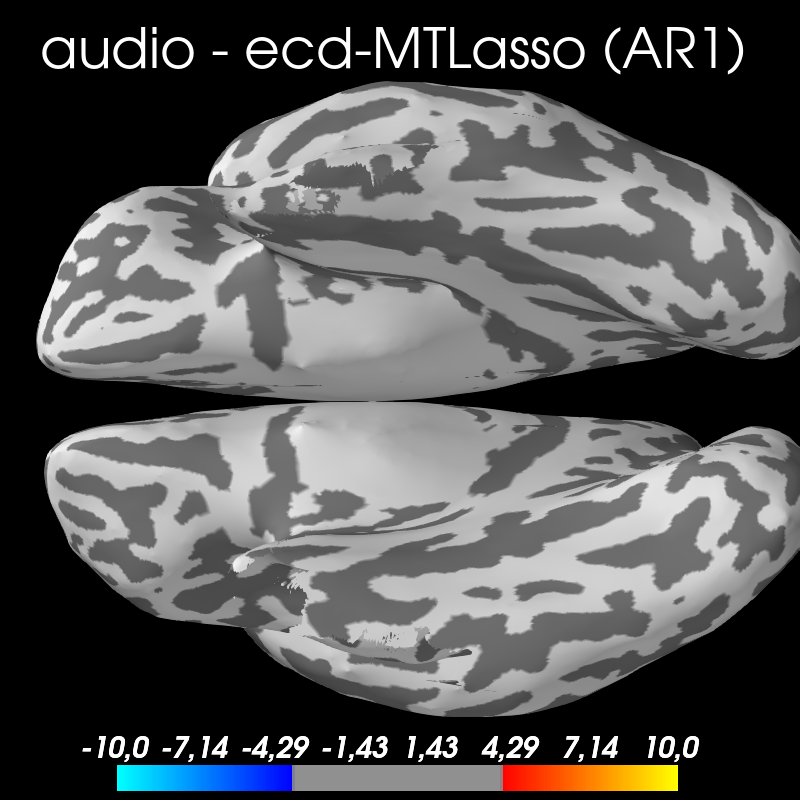}
  \includegraphics[width=0.325\textwidth]
  {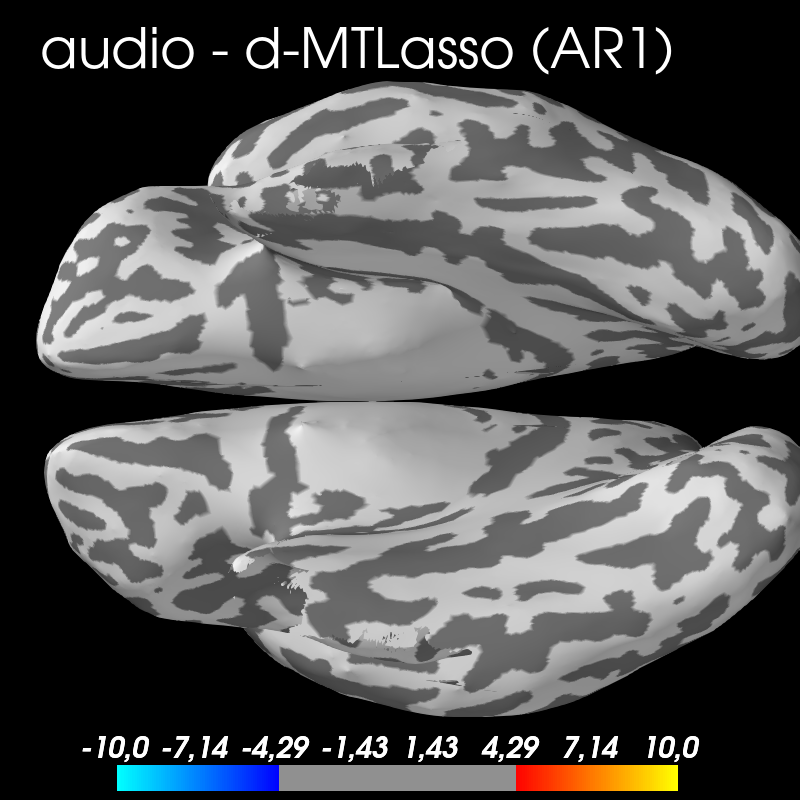}
  \hfill
  \includegraphics[width=0.325\textwidth]
  {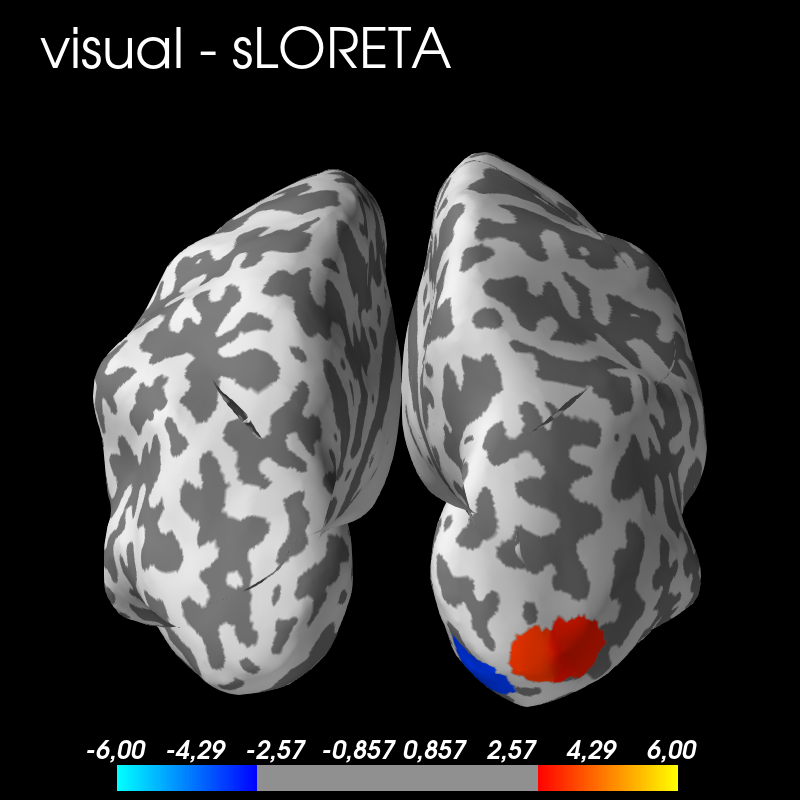}
  \includegraphics[width=0.325\textwidth]
  {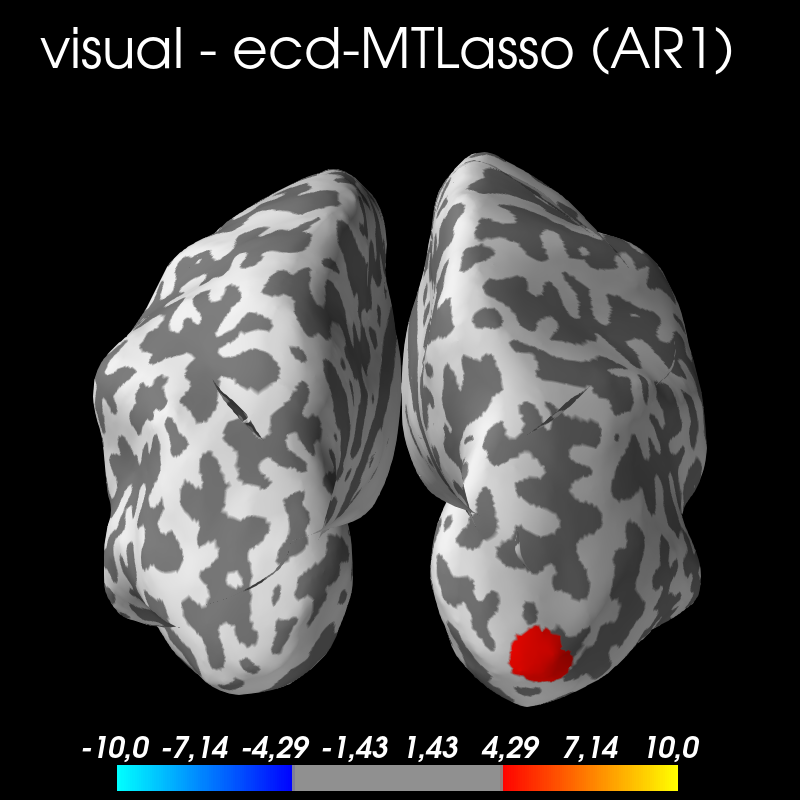}
  \includegraphics[width=0.325\textwidth]
  {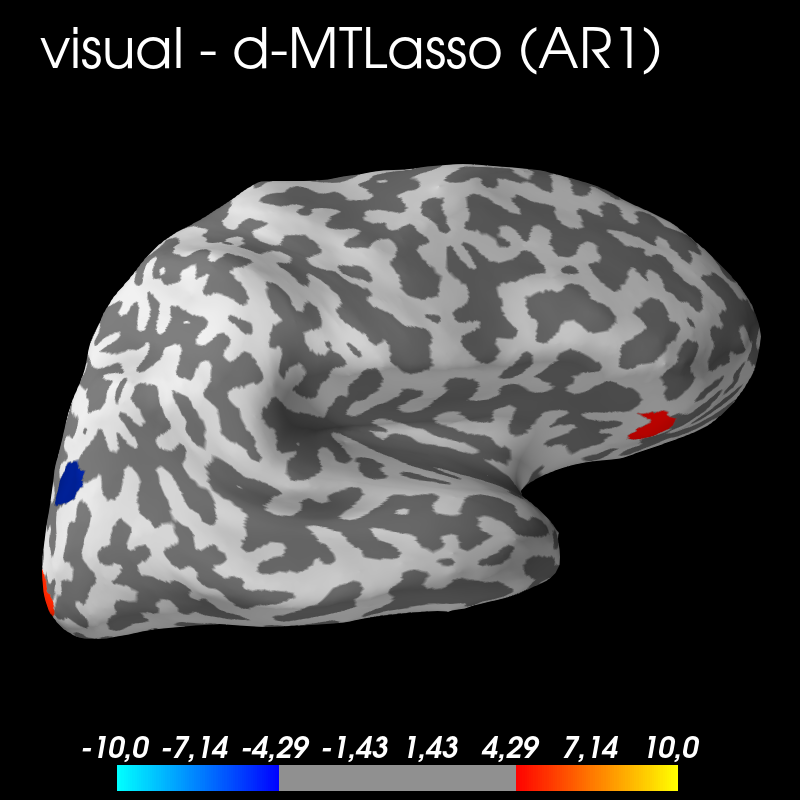}
  \caption{\textbf{Results on real data keeping only EEG sensors.}
  Auditory activations (top) have historically been hard
  to infer with EEG sensors: sLORETA produces only false discoveries
  while ecd-MTL and d-MTL make no discoveries.
  In the visual experiment (bottom): sLORETA and ecd-MTL produce
  expected patterns, d-MTL produces expected patterns plus one
  false discovery in the frontal lobe.
  In our work, we have emphasized MEG experiments:
  they offer more sensors compared to EEG leading to
  improved statistical power.
  }
  \label{fig:EEG_results}
\end{figure}

\end{document}